\documentclass{article} %
\usepackage{conference,times}

\usepackage{amsmath,amsfonts,bm}

\def\eqref#1{equation~\ref{#1}}

\def\1{\bm{1}}

\DeclareMathAlphabet{\mathsfit}{\encodingdefault}{\sfdefault}{m}{sl}
\SetMathAlphabet{\mathsfit}{bold}{\encodingdefault}{\sfdefault}{bx}{n}

\usepackage{hyperref}
\usepackage{url}

\usepackage{verbatim}

\usepackage{comment}

\usepackage[utf8]{inputenc} %
\usepackage[T1]{fontenc}    %

\usepackage{url}            %
\usepackage{booktabs}       %
\usepackage{amsfonts}       %
\usepackage{nicefrac}       %
\usepackage{microtype}      %
\usepackage{xcolor}         %

\usepackage{xcolor}

\usepackage{booktabs}

\usepackage{comment}
\usepackage{booktabs}
\usepackage{multirow}
\usepackage{adjustbox}
\usepackage[table]{xcolor}
\usepackage{colortbl}
\usepackage{url}
\usepackage{xspace}
\usepackage{xcolor}
\usepackage{colortbl}

\usepackage[ruled,vlined]{algorithm2e}

\usepackage{multirow}
\usepackage{balance}
\usepackage{subfigure}
\usepackage{graphicx}
\usepackage{mdframed}
\usepackage{mathtools}
\usepackage{enumitem}
\usepackage[noabbrev,capitalise]{cleveref}

\usepackage{booktabs}
\usepackage{caption}   %
\usepackage{float}    %

\crefname{appendix}{Appendix}{appendices}
\Crefname{appendix}{Appendix}{Appendices}
\crefname{equation}{Eq.}{Eqs.}

\Crefname{equation}{Eq.}{Eqs.}

\crefname{lemma}{Lemma}{Lemmas}
\Crefname{lemma}{Lemma}{Lemmas}
\crefname{proposition}{Proposition}{Propositions}
\Crefname{proposition}{Proposition}{Propositions}
\crefname{assumption}{Assumption}{Assumptions}
\Crefname{assumption}{Assumption}{Assumptions}

\usepackage{amsthm} 
\usepackage{amsthm}
\newtheorem{proposition}{Proposition}
\newtheorem{lemma}{Lemma}
\theoremstyle{definition}
\newtheorem{assumption}{Assumption}
\theoremstyle{plain}
\usepackage{amsmath}
\usepackage{amsthm}
\usepackage{bm}
\usepackage{ragged2e}
\usepackage{colortbl}
\usepackage{booktabs}
\usepackage{colortbl}
\usepackage{xcolor}
\usepackage{amsmath, amssymb}
\usepackage{tcolorbox}
\usepackage{pifont}  %

\usepackage{xcolor} 
\usepackage{booktabs}
\usepackage{xcolor, booktabs, tcolorbox, amsmath, enumitem}
\tcbuselibrary{skins, breakable}

\usepackage{xcolor, booktabs, tcolorbox, amsmath, enumitem}
\tcbuselibrary{skins, breakable}

\usepackage{xcolor}
\usepackage{setspace}
\usepackage{listings}

\newcommand{\ourmodel}{USA\xspace}

\title{USA: Update-aware SAM for Cross-Domain\\ On-Policy Distillation of Language Agents}

\author{%
\hspace*{-\tabcolsep}%
\begin{minipage}{\textwidth}
\centering
\textbf{Qiyong Zhong}$^{1,2}$\thanks{Equal contribution.} \quad
\textbf{Mao Zheng}$^{2}$\footnotemark[1] \quad
\textbf{Mingyang Song}$^{2}$\footnotemark[1] \quad
\textbf{Huwei Ji}$^{3}$ \quad
\textbf{Houcheng Jiang}$^{1}$ \quad \\[0.35em]
\textbf{Jiajie Su}$^{3}$ \quad
\textbf{Li Zhang}$^{3}$ \quad
\textbf{Gengsheng Li}$^{2}$ \quad
\textbf{Junfeng Fang}$^{4}$\thanks{Corresponding author.} \\
\vspace{0.8em}
{\normalfont
$^{1}$University of Science and Technology of China \quad
$^{2}$Foundation Model Department, Tencent \\
$^{3}$Zhejiang University \quad
$^{4}$National University of Singapore \\
}
\vspace{0.8em}
{\normalfont\small
\texttt{\{youngzhong365,zhanglizl80\}@gmail.com} ;
\texttt{\{jihuwei,sujiajie\}@zju.edu.cn} \\
\texttt{\{moonzheng,nickmysong\}@tencent.com} ;
\texttt{ligengsheng2024@ia.ac.cn} \\
\texttt{jianghc@mail.ustc.edu.cn} ;
\texttt{fangjf@nus.edu.sg}
}
\end{minipage}%
\hspace*{-\tabcolsep}%
}

\conffinalcopy

\begin{document}

\maketitle

\begin{abstract}
On-policy distillation instils multi-turn agentic reasoning through dense token-level supervision
on the student's own trajectories, but a single domain saturates early, so further supervision has
to be drawn from other domains. Multi-domain data mixing is the most direct way of incorporating
them, at the cost of conflicts between their data distributions and of retraining the entire model
whenever one domain is revised. Model merging avoids both by distilling every domain independently
and fusing the resulting task vectors afterwards. We find instead that the benefit polarizes across
domain pairs: on those exhibiting negative transfer, every merging operator we evaluate falls below
the single-domain reference. We attribute this to cross-domain update coupling, where a substantial
fraction of coordinates is updated comparably by both domains and a merge can therefore displace
them by as much as their own updates. To overcome this limitation, we propose
\textbf{\underline{U}pdate-aware \underline{SA}M} (\ourmodel), which converts per-parameter update
magnitudes measured during a brief warm-up into per-coordinate perturbation radii, reducing
curvature precisely on the coordinates that carry most of the merging displacement. Experiments
across mathematics, science and code at two student scales show \ourmodel strongest in all six
transfer directions, ahead of the single-domain reference by more than four points on average, and
reverse the negative transfer of the conflicting pairs.
\end{abstract}

\section{Introduction}\label{sec:introduction}

Agentic capabilities let large language models solve complex mathematical, scientific, and programming tasks through multi-turn interaction with external tools~\citep{react,schick2023toolformer,xi2023rise}, and instilling them efficiently via post-training has become a central question~\citep{feng2025retool,li2025torl,xue2025simpletir}. Compared with reinforcement learning approaches such as GRPO~\citep{grpo}, which give only sparse outcome-level rewards, On-Policy Distillation (OPD)~\citep{opd,gu2024minillm} has a teacher supply dense token-level supervision on the student's own trajectories, yielding more stable optimization and higher sample efficiency. It has recently been introduced into agent post-training and shown to be effective~\citep{sod}.

Single-domain OPD, however, exhibits a pronounced performance ceiling. As illustrated in~\Cref{fig:motivation}(a) for Math, training saturates early and thereafter stagnates within a narrow band of fluctuation, and continually scaling up the distillation corpus brings no further gain, while a substantial gap to the teacher remains. This suggests that data volume alone is not the limiting factor, and points to the limited diversity of the supervision signal: the reasoning patterns within a single domain are relatively limited, so additional samples largely trigger patterns already covered by existing data and can hardly keep providing new learning signal. Since the bottleneck stems from the single-domain signal itself, incorporating knowledge from other domains as a complement becomes a natural choice.

Jointly training on a mixture of multi-domain data (mix-data) is the most direct way to incorporate cross-domain knowledge, but it has two inherent drawbacks. First, the data distributions of different domains conflict intrinsically~\citep{dong2024abilities}, so their optimization signals interfere during joint training~\citep{yu2020gradient} and end up hurting performance~\citep{mopd,wang2026decomposing}. Second, adding or updating any single domain requires inflexibly retraining the entire model. Model merging~\citep{Survery_ModelMerging_2024}, in contrast, fully decouples per-domain training: each domain trains independently from a shared initialization, and multi-source knowledge is then fused purely by parameter-space arithmetic~\citep{taskarithmetic}. This sidesteps data-distribution conflicts by construction, while domains can be developed in parallel and added or removed on demand.

\begin{figure*}[t]
\centering
\includegraphics[width=1.0\linewidth]{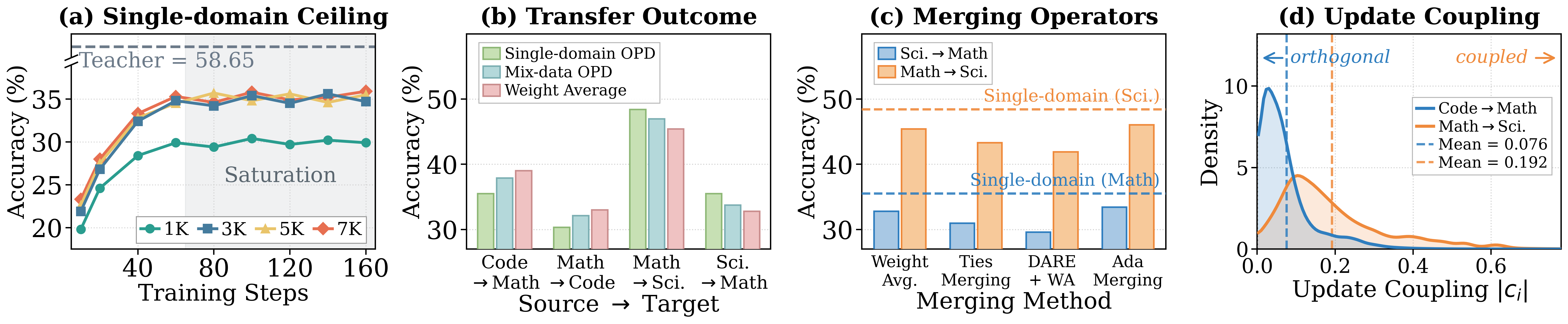}
\caption{\textbf{Motivation and diagnosis of cross-domain OPD via model merging.}
\textbf{(a)} OPD on Math saturates early, and scaling the corpus from 1K to 7K cannot break the ceiling.
\textbf{(b)} Gains polarize across domain pairs: Code$\leftrightarrow$Math improves in both directions, Math$\leftrightarrow$Science degrades in both.
\textbf{(c)} On the conflicting pair, all four merging operators fall below the single-domain baseline.
\textbf{(d)} Per-parameter update coupling $|c_i| = 2|\Delta_t(i)\Delta_s(i)| / (\Delta_t(i)^2 + \Delta_s(i)^2)$ with $\Delta_k = \theta_k - \theta_0$, being $0$ when only one domain updates it and $1$ when both do comparably. The facilitating pair concentrates near $0$, the conflicting pair shifts right. \Cref{app:coupling-magnitude} explains why it ignores the two signs.}
\label{fig:motivation}
\vspace{-1.4 em}
\end{figure*}

Yet however cross-domain knowledge is incorporated, the benefit polarizes across domain pairs. As shown in~\Cref{fig:motivation}(b), Code and Math form a \textbf{facilitating pair}: both mix-data and task-vector merging yield positive gains, surpassing the respective single-domain baselines in both directions. Math and Science, by contrast, form a \textbf{conflicting pair}, where both approaches suffer negative transfer in both directions and fall below the single-domain baselines. This degradation on the conflicting pair cannot be rescued by switching the merging operator: as shown in~\Cref{fig:motivation}(c), all four representative operators, weight averaging~\citep{wa}, Ties-Merging~\citep{ties}, DARE~\citep{dare} and AdaMerging~\citep{adamerging}, land below those baselines in both directions, indicating that the problem does not lie in any particular merging rule. We therefore examine the two kinds of domain pairs at the parameter level, where~\Cref{fig:motivation}(d) locates the boundary in the \textbf{per-parameter update coupling}: for the facilitating pair, the vast majority of parameters are substantially updated by only one domain, so the two sets of updates are nearly orthogonal and the other domain adds little to what merging already costs each of them; for the conflicting pair, a considerable fraction of parameters is updated by both domains with comparable magnitude and the coupling distribution shifts right as a whole, so on the coordinates through which the domains acquired their behaviour a linear combination is shaped as much by the other domain's update as by the domain's own, and the displacement it can impose there reaches the scale of the update itself. Negative transfer is therefore closely associated with \textbf{cross-domain update coupling}, which is already formed while the domains train independently and is bound to surface as mutual interference at merging time, before any operator has been chosen.

Mitigating this requires identifying which parameters carry the cross-domain conflict and making them resist merging interference already during independent per-domain training. To this end, we propose \textbf{\underline{U}pdate-aware \underline{SA}M} (\ourmodel), which builds on Sharpness-Aware Minimization (SAM)~\citep{foret2020sharpness} and adaptively allocates the flatness constraint by the update saliency of each parameter, so as to accommodate the parameter displacement that merging introduces. Prior work observes the effective updates of OPD to be highly sparse, with only a small subset of parameters carrying the acquired knowledge, and that subset to be locked in early in training~\citep{cai2026learning,shen2026geometry,yu2026dense}; we refer to them as \textbf{update-salient parameters}. Since merging linearly combines precisely the task vectors, these parameters contribute the bulk of the merging displacement and therefore dominate the magnitude of cross-domain interference. Accordingly, \ourmodel runs standard OPD for a few steps at the start of each domain's training, maps the measured per-parameter update magnitudes into scaling factors for the perturbation radius, and completes the remaining training within the ellipsoidal neighborhood they define. Compared with standard SAM, which treats all parameters alike, \ourmodel reallocates the flatness budget according to each parameter's contribution to interference: update-salient parameters must keep the loss low over a wider neighborhood, which reduces their local curvature and makes them far more tolerant to the displacement introduced by merging, while the remaining parameters fall back to the standard radius and do not sacrifice in-domain performance to needless regularization. Extensive experiments on agentic reasoning benchmarks across mathematics, science, and code show \ourmodel to consistently outperform all baselines and to reverse the negative transfer on conflicting pairs into positive gains.

\section{Preliminaries}\label{sec:preliminaries}

\subsection{On-Policy Distillation for Agent Post-Training}\label{subsec:opd}

We consider a general agent setting in which the model solves a task through
multi-turn interaction with an external environment. Given an input $x$, the policy
$\pi_\theta$ generates a segment $y_k$ at turn $k$, the environment returns an
observation $o_k$, and $o_k$ is appended to the context and conditions all
subsequent generation, until a final answer is produced. A complete trajectory is
therefore
\begin{equation}
\zeta = (x,\, y_1,\, o_1,\, \dots,\, y_K,\, o_K,\, y_{K+1}),
\label{eq:trajectory}
\end{equation}
where $K$ is the number of interaction turns. Since the observations are produced
by the environment rather than by the policy, they are excluded from the loss; we
write $\mathcal{T}$ for the set of token positions generated by the model, so that
supervision is applied on $\mathcal{T}$ only.

On-Policy Distillation (OPD) trains the student on trajectories sampled from the
student itself, while a teacher policy $\pi_{\text{teacher}}$ supplies dense
token-level supervision at every position of those trajectories. Concretely, it
minimizes a sampled estimate of the reverse KL divergence over the states the
student actually visits,
\begin{equation}
\mathcal{L}_{\text{OPD}}(\theta) = \mathbb{E}_{\zeta \sim \pi_\theta}
\left[\, \sum_{t \in \mathcal{T}}
\left( \log \pi_\theta(y_t \mid y_{<t}) - \log \pi_{\text{teacher}}(y_t \mid y_{<t}) \right)
\right].
\label{eq:opd}
\end{equation}
\Cref{eq:opd} is the training objective used for every domain throughout this
paper, and is also the objective executed during the warm-up stage of \ourmodel.

\subsection{Task Vectors and Model Merging}\label{subsec:merging}

We are given $K$ domains $\mathcal{D} = \{D_1, \dots, D_K\}$, each with its own teacher
and training data, and all starting from the same checkpoint $\theta_0$. Every domain $k$
is trained independently on its own data with the objective in \Cref{eq:opd}, updating all
parameters, which yields $\theta_k$. The knowledge acquired by domain $k$ is summarized by
its \emph{task vector}
\begin{equation}
\Delta_k = \theta_k - \theta_0,
\label{eq:task-vector}
\end{equation}
whose $i$-th component we denote by $\Delta_k(i)$.

Model merging then fuses the independently trained domains purely by arithmetic
operations in parameter space. Following weighted task arithmetic as the default
operator,
\begin{equation}
\theta_{\text{merged}} = \theta_0 + \sum_{k} \lambda_k \Delta_k,
\qquad \sum_{k} \lambda_k = 1, \quad \lambda_k \ge 0,
\label{eq:merging}
\end{equation}
so that every domain, including the one a given merge is evaluated on, contributes its task
vector through a coefficient rather than in full. Writing $D_t$ for the domain being evaluated,
the case of a single source domain $D_s$ reads
$\theta_{\text{merged}} = \theta_0 + (1 - \lambda)\Delta_t + \lambda \Delta_s$ with
$\lambda = \lambda_s$, and $\lambda = 1/2$ recovers plain weight averaging. An important
property of \Cref{eq:merging} is that merging acts \emph{coordinate-wise}: the final
value of the $i$-th parameter is determined solely by the linear superposition of
the task-vector components of the individual domains at that same position,
independently of all other coordinates.

\paragraph{Problem statement.}
Our goal is to make the merged model outperform on the target domain the model obtained by
training on target-domain data alone while keeping every domain trained independently, and
we report all results against that single-domain model as the reference point.

\section{Methodology}\label{sec:method}

\begin{figure*}[t]
\centering
\includegraphics[width=1.0\linewidth]{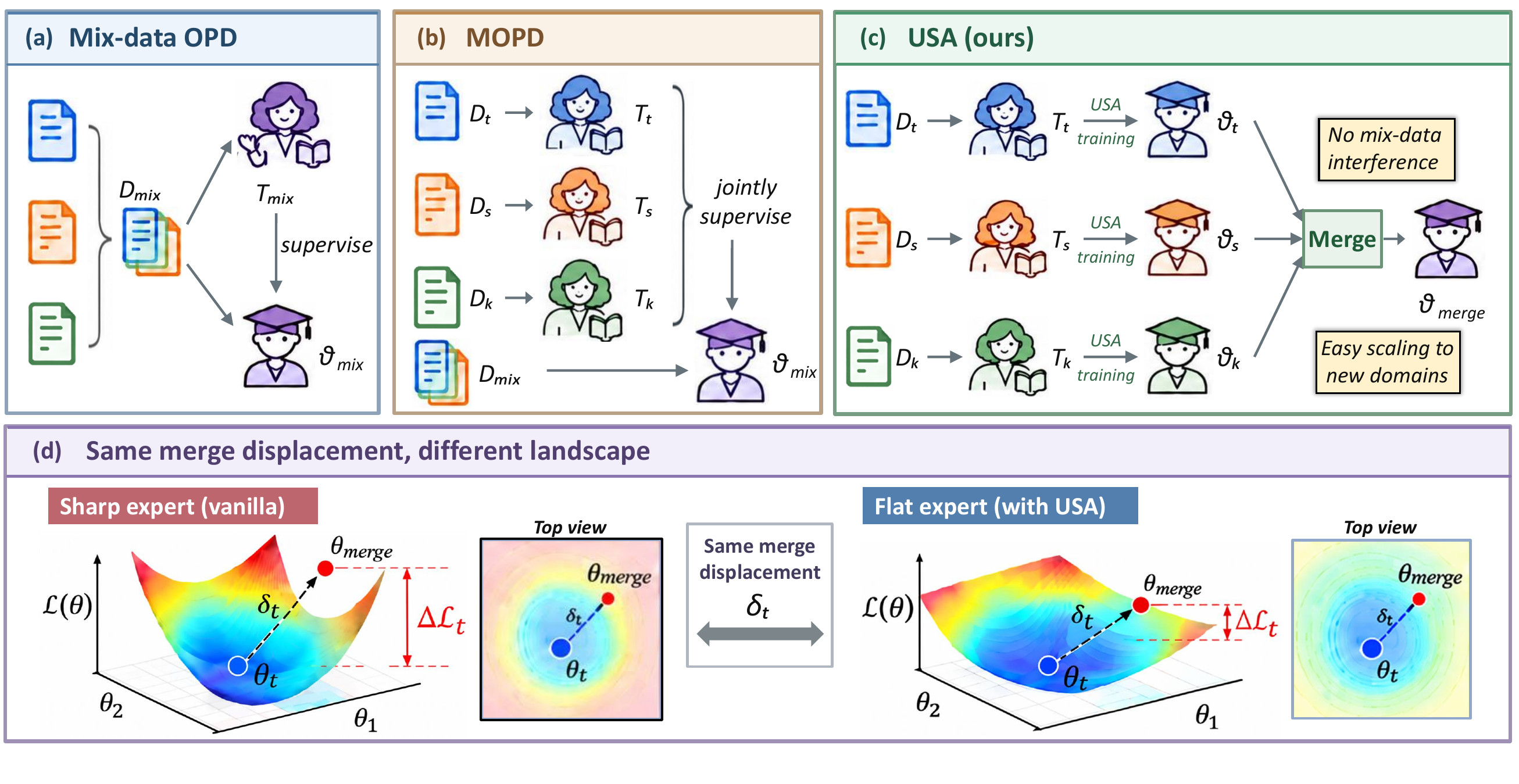}
\vspace{-2.5 em}
\caption{\textbf{Overview of \ourmodel and comparison with existing approaches.}}
\label{fig:framework}

\vspace{-0.5 em}
\end{figure*}

\subsection{What Merging Displaces}\label{subsec:locating}

\paragraph{Merging displaces each domain along the difference between the task vectors.}
As analyzed in~\Cref{sec:introduction}, negative transfer originates from the coupling of the
task vectors of different domains on shared parameters, a coupling already formed while the
domains are trained independently. \Cref{fig:framework} places the resulting scheme beside the
mixed-domain alternatives it is compared against. To act on it we ask what a domain is subjected to at merging
time. Fixing a target domain $D_t$ and subtracting its single-domain solution
$\theta_t = \theta_0 + \Delta_t$ from \Cref{eq:merging} leaves
\begin{equation}
\delta_t \;=\; \theta_{\text{merged}} - \theta_t \;=\; \sum_{j \neq t} \lambda_j \left( \Delta_j - \Delta_t \right) ,
\label{eq:displacement}
\end{equation}
the \emph{merging-induced displacement} of $D_t$, with one entry $\delta_t(i)$ per parameter,
which for a single source domain $D_s$ reduces to $\delta_t = \lambda(\Delta_s - \Delta_t)$.
Two of its properties are used below. The merged model evaluates $D_t$ at $\theta_t + \delta_t$
rather than at $\theta_t$, so any performance lost relative to the single-domain reference must be
attributed to $\delta_t$; the displacement itself decomposes coordinate-wise, although its effect
on the target loss does not, since that effect is governed by a quadratic form whose off-diagonal
terms couple the coordinates. Regrouping \Cref{eq:displacement} as
\begin{equation}
\delta_t \;=\; \underbrace{-\left(1 - \lambda_t\right) \Delta_t}_{\text{shrinkage of the target's own update}} \;+\; \underbrace{\sum_{j \neq t} \lambda_j \Delta_j}_{\text{injection from the other domains}}
\label{eq:decomposition}
\end{equation}
then separates a term the domain determines entirely by itself from one formed elsewhere that it
can neither evaluate nor influence while it trains. Both follow from \Cref{eq:merging} alone and
hold for any merging operator of this form regardless of how the domains were trained.

\paragraph{Which coordinates a domain has to protect.}
The injection term of \Cref{eq:decomposition} confines a domain to its own trajectory, since
intervening on the domains that form it would forfeit what merging gains over joint training: no
domain sees the data of another, and the domains train in parallel and can be added or removed one
at a time without retraining. The criterion must therefore come from the domain's own update,
which is also the one that matters: the coordinates it has moved far from $\theta_0$ are those
through which it acquired the behaviour the merged model is evaluated on, so a displacement of
given magnitude costs more there than on coordinates left untouched. The shrinkage term of
\Cref{eq:decomposition} bears on those coordinates by construction rather than by assumption,
because it scales $\Delta_t$ itself and is largest exactly where the domain has updated most, so
weight averaging pulls a domain back along the directions it relies on. What varies across domain
pairs is what the injection term does to this. \Cref{fig:motivation}(d) reports that on the pairs
motivating this work many parameters are updated by several domains with comparable magnitude, so
on the coordinates a domain relies on that term is comparable to the shrinkage term rather than
negligible beside it. What the criterion needs from it is its size and not its direction, because
the neighbourhood of \Cref{eq:objective} is symmetric about the solution and asks a coordinate to
tolerate a given magnitude of displacement whichever way it points, so a quantity that ignores
the two signs is the conservative input and $|c_i|$ is large exactly where the worst displacement
a merge can impose stands furthest above what the two updates alone would produce, as
\Cref{app:coupling-magnitude} sets out. Being a property of the domain
pairs rather than of the algebra above, this is stated as \Cref{asm:coupling}, under which the
stronger the coupling the larger the interference that can be removed.

\paragraph{Estimating the update magnitude with a short warm-up.}
Every domain is the target of the merges evaluated on it, so we drop $t$ and write $k$ for the
domain currently being trained. Its final update $|\Delta_k|$ is unavailable while it trains, and
this is where the training paradigm enters: recent work reports that the effective updates of OPD
are highly sparse and that the subset carrying them settles early, so a short warm-up already
identifies the coordinates to protect. We therefore let domain $k$ optimize \Cref{eq:opd} from
$\theta_0$ for $N$ steps, reaching $\theta_N^{(k)}$, and record the per-parameter
update magnitude
\begin{equation}
m_k(i) \;=\; \left| \theta_N^{(k)}(i) - \theta_0(i) \right| ,
\label{eq:magnitude}
\end{equation}
whose highest-ranked coordinates constitute the \textbf{update-salient parameters} introduced
in~\Cref{sec:introduction}. The magnitudes are computed once when the warm-up ends and kept fixed
thereafter, and they enter \Cref{subsec:perturbation} only after being normalized against a
reference formed from the domain's own updates, so the warm-up has to recover the relative profile
of the updates rather than their eventual sizes.

\subsection{Update-aware Adaptive Perturbation}\label{subsec:perturbation}

\paragraph{From displacement to a perturbation budget.}
Being displaced is not in itself harmful. What a domain loses is the increase in its own loss
between $\theta_t$ and $\theta_t + \delta_t$, which depends on the shape of the loss surface as
much as on the displacement, since a sharp minimum turns a small displacement into a steep climb
whereas a flat region absorbs the same displacement with little damage. Flatness is moreover the
one such property a domain governs through its own training, so \ourmodel adopts a
sharpness-aware objective and applies it along the coordinates its own update singles out.

Concretely, we turn the update magnitudes of \Cref{eq:magnitude} into a per-coordinate scale. Let
$g_i$ denote the weight matrix containing coordinate $i$. Within each matrix $g$ we take the
$(1-p)$-quantile of the update magnitudes as the reference level
\begin{equation}
D_k(g) \;=\; Q_{1-p}\!\left(\left\{\, m_k(i) \;:\; i \in g \,\right\}\right) ,
\label{eq:reference}
\end{equation}
against which every coordinate of that matrix is measured,
\begin{equation}
s_k(i) \;=\; 1 + (\alpha - 1)\,\min\!\left( \frac{m_k(i)}{D_k(g_i)},\; 1 \right) ,
\label{eq:scale}
\end{equation}
where $\alpha \ge 1$ caps the amplification so that $s_k(i) \in [1, \alpha]$ and $p$ fixes the
fraction of coordinates in each matrix that reach it in full, the remainder interpolating
continuously between $1$ and $\alpha$.

Three properties of this construction matter. It is invariant to the overall size of the update,
since replacing $m_k$ by $c\,m_k$ leaves \Cref{eq:scale} unchanged, so the warm-up of
\Cref{subsec:locating} has to recover the profile of the update rather than its eventual magnitude.
Its reference is a fixed fraction of the coordinates of one matrix rather than a single extreme
entry of the whole model, which is what makes the amplification usable on updates as heavy-tailed
as those of OPD: a reference at the largest entry would leave almost every coordinate at
$s_k(i) \approx 1$, and one shared across the network would deny amplification to any matrix whose
updates are uniformly small. It is finally continuous rather than a hard mask, so no
threshold has to be chosen and coordinates of intermediate saliency are treated in proportion. We
fix $p$ to $1\%$ throughout, in keeping with the sparsity the construction rests on, leaving
$\alpha$ as the only free quantity of \Cref{eq:scale}.

\paragraph{The training objective.}
Collecting the scales into the diagonal matrix $S_k = \mathrm{diag}(s_k)$, domain $k$
solves
\begin{equation}
\min_{\theta} \;\; \max_{\left\|S_k^{-1}\epsilon\right\|_2 \le \rho} \;\; \mathcal{L}_k(\theta + \epsilon) ,
\label{eq:objective}
\end{equation}
where $\rho$ is the base radius and $\mathcal{L}_k$ is the OPD objective of \Cref{eq:opd}. The
constraint set is an axis-aligned ellipsoid whose semi-axis along coordinate $i$ is
$\rho\,s_k(i)$, so the update-salient parameters have to keep the loss low over a wider interval
and the landscape is flattened along those directions, while coordinates of lower saliency are
constrained over an interval that shrinks continuously towards $\rho$ and retain their expressive
freedom. Setting $\alpha = 1$ gives
$S_k = I$ and recovers the isotropic ball, of which \ourmodel is therefore a strict generalization.

Substituting $\epsilon = S_k v$ turns the inner maximization after linearization into
$\max_{\|v\|_2 \le \rho} v^\top S_k \nabla_\theta \mathcal{L}_k(\theta)$, whose maximizer
$v^\star = \rho\, S_k \nabla_\theta \mathcal{L}_k(\theta) / \|S_k \nabla_\theta \mathcal{L}_k(\theta)\|$
gives the closed-form perturbation (\Cref{app:proof-perturbation} gives the full derivation)
\begin{equation}
\hat{\epsilon}_k \;=\; S_k v^\star \;=\; \rho\,\frac{S_k^2\, \nabla_\theta \mathcal{L}_k(\theta)}{\left\| S_k \nabla_\theta \mathcal{L}_k(\theta) \right\|} .
\label{eq:perturbation}
\end{equation}
Each step after the warm-up therefore evaluates the gradient at $\theta$, forms $\hat{\epsilon}_k$,
and updates $\theta$ with the gradient taken at the perturbed point $\theta + \hat{\epsilon}_k$.

\paragraph{Theoretical analysis.}
Write $\Xi_t = \mathcal{L}_t(\theta_t + \delta_t) - \mathcal{L}_t(\theta_t)$ for the \emph{merging
interference}, the loss merging inflicts on $D_t$ in excess of the solution it had reached on its
own. We analyse $\Xi_t$ as a local surrogate for the accuracy degradation reported in
\Cref{fig:motivation}(b), a positive value placing the merged model higher on the loss surface of
its target domain.
Expanding $\mathcal{L}_t$ around $\theta_t$ and bounding the resulting quadratic form in a diagonal
metric gives the following.

\begin{proposition}[Interference bound in a diagonal metric]
\label{prop:bound}
Let $\mathcal{L}_t$ be twice differentiable with a locally Lipschitz Hessian, write
$H_t = \nabla_\theta^2 \mathcal{L}_t(\theta_t)$, and let $\theta_t$ be a local minimizer of
$\mathcal{L}_t$, so that $\nabla_\theta \mathcal{L}_t(\theta_t) = 0$ and $H_t \succeq 0$. Then for
every diagonal $S$ with strictly positive entries,
\begin{equation}
\left|\Xi_t\right| \;\le\; \tfrac{1}{2}\,
\underbrace{\lambda_{\max}\!\left(S H_t S\right)}_{\textup{curvature in the } S\textup{-metric}}
\cdot
\underbrace{\left\|S^{-1}\delta_t\right\|_2^2}_{\textup{reweighted displacement}}
\;+\; \mathcal{O}\!\left(\|\delta_t\|_2^3\right) .
\label{eq:bound}
\end{equation}
\end{proposition}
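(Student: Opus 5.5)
We expand $\mathcal{L}_t$ to second order around the local minimizer $\theta_t$, then bound the quadratic form by a Rayleigh quotient in the $S$-metric. By Taylor's theorem with integral remainder,
\[
\Xi_t \;=\; \nabla_\theta \mathcal{L}_t(\theta_t)^\top \delta_t + \tfrac{1}{2}\,\delta_t^\top H_t\,\delta_t + R(\delta_t),
\]
where $R(\delta_t) = \int_0^1 (1-\tau)\,\delta_t^\top\!\left(\nabla^2\mathcal{L}_t(\theta_t+\tau\delta_t) - H_t\right)\delta_t\, d\tau$. The first-order term vanishes because $\nabla_\theta\mathcal{L}_t(\theta_t)=0$.

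For the remainder, local Lipschitz continuity of the Hessian with constant $L_H$ on a neighbourhood of $\theta_t$ gives
\[
\|\nabla^2\mathcal{L}_t(\theta_t+\tau\delta_t) - H_t\|_2 \le L_H \tau \|\delta_t\|_2 ,
\]
so $|R(\delta_t)| \le \tfrac{L_H}{6}\|\delta_t\|_2^3$. This is the $\mathcal{O}(\|\delta_t\|_2^3)$ term. It is stated for $\delta_t$ small enough that the segment $[\theta_t,\theta_t+\delta_t]$ stays in that neighbourhood, which is how the local statement should be read.

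For the quadratic term, set $u = S^{-1}\delta_t$, which is legitimate because $S$ is diagonal with strictly positive entries and hence invertible. Then $\delta_t = S u$ and
\[
\delta_t^\top H_t \delta_t = u^\top (S H_t S)\, u .
\]
The matrix $S H_t S$ is symmetric, and it is positive semidefinite since it is congruent to $H_t \succeq 0$. The Rayleigh quotient bound therefore gives
\[
0 \le u^\top S H_t S\, u \le \lambda_{\max}(S H_t S)\,\|u\|_2^2 = \lambda_{\max}(S H_t S)\,\|S^{-1}\delta_t\|_2^2 .
\]
Adding the two pieces, the triangle inequality yields
\[
|\Xi_t| \le \tfrac12\lambda_{\max}(SH_tS)\,\|S^{-1}\delta_t\|_2^2 + \tfrac{L_H}{6}\|\delta_t\|_2^3 ,
\]
which is the claimed bound. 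The absolute value is harmless: the quadratic part is nonnegative, so $|\tfrac12 \delta_t^\top H_t\delta_t + R| \le \tfrac12\delta_t^\top H_t\delta_t + |R|$.

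The only delicate step is the remainder. One must make explicit that "locally Lipschitz Hessian" supplies a uniform constant on a neighbourhood containing the segment, and that the $\mathcal{O}(\cdot)$ is in $\|\delta_t\|_2$ rather than in $\|S^{-1}\delta_t\|_2$. These are equivalent up to constants depending on $\min_i s(i)$ and $\max_i s(i)$, and those constants are bounded in $[1,\alpha]$ for the choice $S = S_t$. Everything else is linear algebra, valid for every admissible $S$.
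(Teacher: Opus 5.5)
Your proof is correct and is the paper's argument: substitute $u = S^{-1}\delta_t$, bound $u^\top (S H_t S) u$ by the Rayleigh quotient using $S H_t S \succeq 0$, and control the Taylor remainder with the Lipschitz Hessian. You also make explicit the constant $L_H/6$ and the neighbourhood condition, which the paper only asserts, and your closing remark comparing $\|\delta_t\|_2$ with $\|S^{-1}\delta_t\|_2$ is unnecessary because the remainder $R(\delta_t)$ does not involve $S$ at all.
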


Of the two factors the displacement is fixed once the domains have been trained, whereas the
curvature is a property of the loss surface that the domain shapes while training, as
\Cref{fig:framework}(d) illustrates, and it is
precisely what \Cref{eq:objective} suppresses: near a stationary local minimum its inner
maximization equals
$\tfrac{1}{2}\rho^2 \lambda_{\max}(S_k H_k S_k)$ to second order in $\rho$, the curvature factor of
the bound at $S = S_k$, so training and interference are expressed in one and the same metric
rather than related by analogy. \Cref{app:proof} proves \Cref{prop:bound}, establishes this
identity, and shows that the bound is reduced by placing the large entries of $S$ on the
coordinates that carry the displacement, which is what \Cref{eq:scale} does and
\Cref{asm:coupling} quantifies; it is also what limits how far the amplification may go, and is
why $\alpha$ caps it. \Cref{app:cost} reports the training cost and the interaction with the
surrounding merging pipeline.

\section{Experiment}\label{sec:experiment}

\subsection{Experimental Setup}\label{subsec:setup}

\paragraph{Datasets \& Benchmarks.}
We study three domains and train each on a $4$k dataset, drawn from DAPO-Math~\citep{yu2025dapo} for
mathematics, the code partition of Skywork-OR1~\citep{he2025skywork} for code, and
MegaScience~\citep{fan2025megascience} for science. Each is evaluated on two benchmarks: AIME 2025
and HMMT February 2026~\citep{dekoninck2026matharena}, LiveCodeBench-v6~\citep{jainlivecodebench}
and NaturalCodeBench~\citep{zhang-etal-2024-naturalcodebench}, and
GPQA-Diamond~\citep{rein2024gpqa} and SciBench-Atkins~\citep{wang2024scibench} respectively, with
the details in~\Cref{app:setup}.

\paragraph{Evaluation Setups.}
Each domain teacher is a Qwen3-14B model~\citep{qwen2025qwen3} further optimized with GRPO on the
corresponding dataset above, and the students are Qwen3-1.7B and Qwen3-4B. Unless stated otherwise,
every result we report is obtained on the Qwen3-1.7B student, so that the ablation
of~\Cref{subsec:ablation} and the analyses of~\Cref{subsec:gain,subsec:scalability} are read at one
scale. We decode with temperature $1.0$ and nucleus sampling at top\_p${=}0.6$, drawing $32$
samples per problem, and report average@$32$ as a percentage, with the agent placed in a code
interpreter throughout.

\paragraph{Baselines \& Implementation Details.}
We compare against (1) Vanilla, (2) Mix-data SFT, (3) Mix-data GRPO~\citep{grpo},
(4) Mix-data OPD, (5) Single-domain OPD and (6) MOPD~\citep{mopd}. We further compare against four
merging baselines, which start from the same OPD students that Single-domain OPD produces, each
trained on its own domain against its own teacher and without our objective, and combine their task
vectors by (7) Weight Average~\citep{wa}, (8) TIES-Merging~\citep{ties}, (9) DARE+WA~\citep{dare}
and (10) AdaMerging~\citep{adamerging}. These four therefore share identical experts and differ only
in the merging rule. \Cref{app:baselines} describes each baseline and \Cref{app:implementation}
reports the implementation details.

\begin{table*}[t]
    \centering
    \caption{
    \textbf{Cross-domain performance over six transfer directions at two student scales.}
    A column headed \emph{source} $\rightarrow$ \emph{target} evaluates the merged model on the
    target domain and averages its two benchmarks. The best results are in \textbf{bold}, and the
    second-best results are \underline{underlined}. We report average@32 over 5 runs, and a
    standard deviation is propagated from those of the two benchmarks, whose per-benchmark
    values are given in~\Cref{tab:detailed_1p7b,tab:detailed_4b}.
    }
    \vspace{0.5mm}
    \label{tab:main}

    \begingroup
    \setlength{\tabcolsep}{4.5pt}
    \renewcommand{\arraystretch}{0.95}
    \setlength{\aboverulesep}{1pt}
    \setlength{\belowrulesep}{1pt}
    \providecommand{\hdrstrut}{\rule[-3.9pt]{0pt}{13.0pt}}
    \normalsize

    \begin{adjustbox}{max width=0.98\textwidth}
    \begin{tabular}{lcccccc}
        \toprule

        {\footnotesize\textbf{Method}}\hdrstrut
        & {\footnotesize\textbf{Code $\rightarrow$ Math}}
        & {\footnotesize\textbf{Math $\rightarrow$ Code}}
        & {\footnotesize\textbf{Math $\rightarrow$ Sci.}}
        & {\footnotesize\textbf{Sci. $\rightarrow$ Math}}
        & {\footnotesize\textbf{Sci. $\rightarrow$ Code}}
        & {\footnotesize\textbf{Code $\rightarrow$ Sci.}} \\

        \midrule

        \rowcolor[HTML]{FEE090}
        \multicolumn{7}{c}{
            \hdrstrut\textbf{Teacher Model from the Qwen3-14B Series}
        } \\

        GRPO
        & 58.65\phantom{.}\scalebox{0.72}{$\pm$0.83}
        & 61.91\phantom{.}\scalebox{0.72}{$\pm$0.59}
        & 71.00\phantom{.}\scalebox{0.72}{$\pm$0.68}
        & 58.65\phantom{.}\scalebox{0.72}{$\pm$0.83}
        & 61.91\phantom{.}\scalebox{0.72}{$\pm$0.59}
        & 71.00\phantom{.}\scalebox{0.72}{$\pm$0.68} \\

        \midrule

        \rowcolor[HTML]{E0F3F8}
        \multicolumn{7}{c}{
            \hdrstrut\textbf{Student Models from the Qwen3-1.7B Series}
        } \\

        Vanilla
        & 15.09\phantom{.}\scalebox{0.72}{$\pm$0.71}
        & 19.03\phantom{.}\scalebox{0.72}{$\pm$0.46}
        & 36.41\phantom{.}\scalebox{0.72}{$\pm$0.56}
        & 15.09\phantom{.}\scalebox{0.72}{$\pm$0.71}
        & 19.03\phantom{.}\scalebox{0.72}{$\pm$0.46}
        & 36.41\phantom{.}\scalebox{0.72}{$\pm$0.56} \\

        Mix-data SFT
        & 26.00\phantom{.}\scalebox{0.72}{$\pm$0.95}
        & 22.32\phantom{.}\scalebox{0.72}{$\pm$0.46}
        & 41.40\phantom{.}\scalebox{0.72}{$\pm$0.65}
        & 25.07\phantom{.}\scalebox{0.72}{$\pm$0.83}
        & 22.20\phantom{.}\scalebox{0.72}{$\pm$0.41}
        & 42.49\phantom{.}\scalebox{0.72}{$\pm$0.64} \\

        Mix-data GRPO
        & 32.55\phantom{.}\scalebox{0.72}{$\pm$0.81}
        & 26.82\phantom{.}\scalebox{0.72}{$\pm$0.51}
        & 45.10\phantom{.}\scalebox{0.72}{$\pm$0.54}
        & 30.28\phantom{.}\scalebox{0.72}{$\pm$0.83}
        & 26.38\phantom{.}\scalebox{0.72}{$\pm$0.47}
        & 46.14\phantom{.}\scalebox{0.72}{$\pm$0.56} \\

        Mix-data OPD
        & 37.91\phantom{.}\scalebox{0.72}{$\pm$0.85}
        & 32.12\phantom{.}\scalebox{0.72}{$\pm$0.45}
        & 46.96\phantom{.}\scalebox{0.72}{$\pm$0.60}
        & 33.74\phantom{.}\scalebox{0.72}{$\pm$0.87}
        & 31.17\phantom{.}\scalebox{0.72}{$\pm$0.55}
        & 49.21\phantom{.}\scalebox{0.72}{$\pm$0.58} \\

        Single-domain OPD
        & 35.51\phantom{.}\scalebox{0.72}{$\pm$0.77}
        & 30.33\phantom{.}\scalebox{0.72}{$\pm$0.56}
        & 48.42\phantom{.}\scalebox{0.72}{$\pm$0.64}
        & 35.51\phantom{.}\scalebox{0.72}{$\pm$0.77}
        & 30.33\phantom{.}\scalebox{0.72}{$\pm$0.56}
        & 48.42\phantom{.}\scalebox{0.72}{$\pm$0.64} \\

        MOPD
        & 39.02\phantom{.}\scalebox{0.72}{$\pm$0.72}
        & \underline{33.12}\phantom{.}\scalebox{0.72}{$\pm$0.48}
        & \underline{48.85}\phantom{.}\scalebox{0.72}{$\pm$0.51}
        & \underline{35.64}\phantom{.}\scalebox{0.72}{$\pm$0.79}
        & \underline{32.07}\phantom{.}\scalebox{0.72}{$\pm$0.53}
        & \underline{50.21}\phantom{.}\scalebox{0.72}{$\pm$0.57} \\

        \cmidrule(lr){1-7}

        Weight Average
        & \underline{39.04}\phantom{.}\scalebox{0.72}{$\pm$0.58}
        & 33.01\phantom{.}\scalebox{0.72}{$\pm$0.42}
        & 45.44\phantom{.}\scalebox{0.72}{$\pm$0.72}
        & 32.80\phantom{.}\scalebox{0.72}{$\pm$0.86}
        & 31.54\phantom{.}\scalebox{0.72}{$\pm$0.37}
        & 50.05\phantom{.}\scalebox{0.72}{$\pm$0.68} \\

        Ties-Merging
        & 36.81\phantom{.}\scalebox{0.72}{$\pm$0.84}
        & 30.90\phantom{.}\scalebox{0.72}{$\pm$0.60}
        & 43.33\phantom{.}\scalebox{0.72}{$\pm$0.49}
        & 30.99\phantom{.}\scalebox{0.72}{$\pm$0.82}
        & 29.87\phantom{.}\scalebox{0.72}{$\pm$0.64}
        & 48.06\phantom{.}\scalebox{0.72}{$\pm$0.49} \\

        DARE+WA
        & 35.35\phantom{.}\scalebox{0.72}{$\pm$0.93}
        & 29.54\phantom{.}\scalebox{0.72}{$\pm$0.57}
        & 41.93\phantom{.}\scalebox{0.72}{$\pm$0.70}
        & 29.59\phantom{.}\scalebox{0.72}{$\pm$1.06}
        & 28.53\phantom{.}\scalebox{0.72}{$\pm$0.43}
        & 46.43\phantom{.}\scalebox{0.72}{$\pm$0.75} \\

        AdaMerging
        & 39.02\phantom{.}\scalebox{0.72}{$\pm$0.65}
        & 32.89\phantom{.}\scalebox{0.72}{$\pm$0.39}
        & 46.05\phantom{.}\scalebox{0.72}{$\pm$0.49}
        & 33.44\phantom{.}\scalebox{0.72}{$\pm$0.60}
        & 32.05\phantom{.}\scalebox{0.72}{$\pm$0.50}
        & 50.01\phantom{.}\scalebox{0.72}{$\pm$0.48} \\

        \cmidrule(lr){1-7}

        \textbf{\ourmodel}
        & \textbf{41.51}\phantom{.}\scalebox{0.72}{$\pm$0.83}
        & \textbf{35.64}\phantom{.}\scalebox{0.72}{$\pm$0.49}
        & \textbf{53.50}\phantom{.}\scalebox{0.72}{$\pm$0.53}
        & \textbf{38.76}\phantom{.}\scalebox{0.72}{$\pm$0.81}
        & \textbf{33.63}\phantom{.}\scalebox{0.72}{$\pm$0.44}
        & \textbf{52.80}\phantom{.}\scalebox{0.72}{$\pm$0.58} \\

        \midrule

        \rowcolor[HTML]{E0F3F8}
        \multicolumn{7}{c}{
            \hdrstrut\textbf{Student Models from the Qwen3-4B Series}
        } \\

        Vanilla
        & 36.79\phantom{.}\scalebox{0.72}{$\pm$0.81}
        & 39.54\phantom{.}\scalebox{0.72}{$\pm$0.50}
        & 51.50\phantom{.}\scalebox{0.72}{$\pm$0.60}
        & 36.79\phantom{.}\scalebox{0.72}{$\pm$0.81}
        & 39.54\phantom{.}\scalebox{0.72}{$\pm$0.50}
        & 51.50\phantom{.}\scalebox{0.72}{$\pm$0.60} \\

        Mix-data SFT
        & 41.71\phantom{.}\scalebox{0.72}{$\pm$0.87}
        & 45.15\phantom{.}\scalebox{0.72}{$\pm$0.49}
        & 55.07\phantom{.}\scalebox{0.72}{$\pm$0.66}
        & 40.64\phantom{.}\scalebox{0.72}{$\pm$0.87}
        & 44.83\phantom{.}\scalebox{0.72}{$\pm$0.49}
        & 55.65\phantom{.}\scalebox{0.72}{$\pm$0.57} \\

        Mix-data GRPO
        & 48.43\phantom{.}\scalebox{0.72}{$\pm$0.85}
        & 51.25\phantom{.}\scalebox{0.72}{$\pm$0.47}
        & 60.83\phantom{.}\scalebox{0.72}{$\pm$0.56}
        & 46.94\phantom{.}\scalebox{0.72}{$\pm$0.89}
        & 51.01\phantom{.}\scalebox{0.72}{$\pm$0.52}
        & 62.30\phantom{.}\scalebox{0.72}{$\pm$0.63} \\

        Mix-data OPD
        & 48.75\phantom{.}\scalebox{0.72}{$\pm$0.96}
        & 51.99\phantom{.}\scalebox{0.72}{$\pm$0.43}
        & 60.88\phantom{.}\scalebox{0.72}{$\pm$0.60}
        & 46.98\phantom{.}\scalebox{0.72}{$\pm$0.96}
        & 51.34\phantom{.}\scalebox{0.72}{$\pm$0.51}
        & 62.18\phantom{.}\scalebox{0.72}{$\pm$0.51} \\

        Single-domain OPD
        & 48.14\phantom{.}\scalebox{0.72}{$\pm$0.85}
        & 51.27\phantom{.}\scalebox{0.72}{$\pm$0.62}
        & \underline{61.71}\phantom{.}\scalebox{0.72}{$\pm$0.71}
        & 48.14\phantom{.}\scalebox{0.72}{$\pm$0.85}
        & 51.27\phantom{.}\scalebox{0.72}{$\pm$0.62}
        & 61.71\phantom{.}\scalebox{0.72}{$\pm$0.71} \\

        MOPD
        & \underline{49.58}\phantom{.}\scalebox{0.72}{$\pm$0.73}
        & \underline{52.68}\phantom{.}\scalebox{0.72}{$\pm$0.57}
        & 61.28\phantom{.}\scalebox{0.72}{$\pm$0.56}
        & \underline{48.63}\phantom{.}\scalebox{0.72}{$\pm$0.69}
        & \underline{52.22}\phantom{.}\scalebox{0.72}{$\pm$0.47}
        & \underline{62.70}\phantom{.}\scalebox{0.72}{$\pm$0.71} \\

        \cmidrule(lr){1-7}

        Weight Average
        & 49.32\phantom{.}\scalebox{0.72}{$\pm$0.93}
        & 52.53\phantom{.}\scalebox{0.72}{$\pm$0.51}
        & 59.02\phantom{.}\scalebox{0.72}{$\pm$0.60}
        & 46.09\phantom{.}\scalebox{0.72}{$\pm$0.81}
        & 51.91\phantom{.}\scalebox{0.72}{$\pm$0.44}
        & \underline{62.70}\phantom{.}\scalebox{0.72}{$\pm$0.52} \\

        Ties-Merging
        & 47.76\phantom{.}\scalebox{0.72}{$\pm$0.93}
        & 51.16\phantom{.}\scalebox{0.72}{$\pm$0.52}
        & 57.08\phantom{.}\scalebox{0.72}{$\pm$0.60}
        & 44.35\phantom{.}\scalebox{0.72}{$\pm$0.84}
        & 50.25\phantom{.}\scalebox{0.72}{$\pm$0.67}
        & 60.68\phantom{.}\scalebox{0.72}{$\pm$0.65} \\

        DARE+WA
        & 46.50\phantom{.}\scalebox{0.72}{$\pm$0.93}
        & 49.43\phantom{.}\scalebox{0.72}{$\pm$0.62}
        & 56.53\phantom{.}\scalebox{0.72}{$\pm$0.69}
        & 43.09\phantom{.}\scalebox{0.72}{$\pm$0.87}
        & 48.91\phantom{.}\scalebox{0.72}{$\pm$0.47}
        & 59.10\phantom{.}\scalebox{0.72}{$\pm$0.54} \\

        AdaMerging
        & 49.31\phantom{.}\scalebox{0.72}{$\pm$0.68}
        & 52.55\phantom{.}\scalebox{0.72}{$\pm$0.50}
        & 59.48\phantom{.}\scalebox{0.72}{$\pm$0.52}
        & 46.53\phantom{.}\scalebox{0.72}{$\pm$0.77}
        & 52.06\phantom{.}\scalebox{0.72}{$\pm$0.48}
        & 62.59\phantom{.}\scalebox{0.72}{$\pm$0.57} \\

        \cmidrule(lr){1-7}

        \textbf{\ourmodel}
        & \textbf{52.16}\phantom{.}\scalebox{0.72}{$\pm$0.92}
        & \textbf{56.25}\phantom{.}\scalebox{0.72}{$\pm$0.46}
        & \textbf{67.04}\phantom{.}\scalebox{0.72}{$\pm$0.54}
        & \textbf{50.72}\phantom{.}\scalebox{0.72}{$\pm$0.71}
        & \textbf{54.46}\phantom{.}\scalebox{0.72}{$\pm$0.45}
        & \textbf{65.65}\phantom{.}\scalebox{0.72}{$\pm$0.58} \\

        \bottomrule
    \end{tabular}
    \end{adjustbox}

    \endgroup

    \vspace{-10pt}
\end{table*}

\subsection{Overall Performance}\label{subsec:overall}

\Cref{tab:main} reports the six transfer directions at two student scales, with the per-benchmark
scores behind every cell in~\Cref{app:detailed}. \ourmodel is merged with the same weighted task
arithmetic as the Weight Average row, so any difference between the two is due entirely to how the
experts were trained.

\begin{itemize}[leftmargin=*]

\item \textbf{Obs 1: \ourmodel attains the best score in every transfer direction at both student
scales.} It improves on the single-domain reference everywhere, by $4.55$ points on average at
$1.7$B and $4.01$ points at $4$B, so the benefit is not confined to the smaller model where
headroom is largest.

\item \textbf{Obs 2: the benefit of cross-domain knowledge polarizes across domain pairs, and
\ourmodel is the only method that achieves positive transfer on the unfavourable pair in both
directions and at both scales.} On the facilitating pair, Code and Math, most methods help:
Mix-data OPD, MOPD, weight averaging and AdaMerging all clear the reference in all four cells,
the exceptions being Mix-data SFT and Mix-data GRPO, which average below it even here and so
indict their algorithms rather than their data. On the conflicting pair, Math and Science, the
four operators all fall below the reference at either scale and no other baseline clears it by as
much as a point. Mix-data OPD, which never merges, splits the same way, so the conflict is not the
merging arithmetic. On Math $\rightarrow$ Science the best operator falls $2.37$ points below the
reference while \ourmodel rises $5.08$ points above it, so cross-domain knowledge is not merely
kept from harming the target but made to benefit it.

\item \textbf{Obs 3: the improvement originates in training rather than in the merging rule or in
access to additional data.} \ourmodel merges by the same weighted task arithmetic as Weight
Average, yet surpasses AdaMerging, which optimizes its coefficients layer by layer, by close to
$4$ points at either scale, and surpasses in every direction both Mix-data OPD, which faces no
merging step at all, and MOPD. Reserving the flatness budget for the update-salient parameters
during independent training is therefore more effective than either reconciling the experts
afterwards or mixing the data.

\end{itemize}

\begin{table*}[t]
    \centering
    \caption{
    \textbf{Ablation study of \ourmodel on the Qwen3-1.7B students.}
    The best results are in \textbf{bold}. We report average@32 over 5 runs.
    }
    \vspace{0.5mm}
    \label{tab:ablation}

    \begingroup
    \setlength{\tabcolsep}{4.5pt}
    \renewcommand{\arraystretch}{0.95}
    \setlength{\aboverulesep}{1pt}
    \setlength{\belowrulesep}{1pt}
    \providecommand{\hdrstrut}{\rule[-3.9pt]{0pt}{13.0pt}}
    \normalsize

    \begin{adjustbox}{max width=0.98\textwidth}
    \begin{tabular}{lcccccc}
        \toprule

        {\footnotesize\textbf{Variant}}\hdrstrut
        & {\footnotesize\textbf{Code $\rightarrow$ Math}}
        & {\footnotesize\textbf{Math $\rightarrow$ Code}}
        & {\footnotesize\textbf{Math $\rightarrow$ Sci.}}
        & {\footnotesize\textbf{Sci. $\rightarrow$ Math}}
        & {\footnotesize\textbf{Sci. $\rightarrow$ Code}}
        & {\footnotesize\textbf{Code $\rightarrow$ Sci.}} \\

        \midrule

        \rowcolor[HTML]{E0F3F8}
        \multicolumn{7}{c}{
            \hdrstrut\textit{\textbf{Flatness Constraint}}
        } \\

        \quad (1.1) w/o SAM
        & 39.04\phantom{.}\scalebox{0.72}{$\pm$0.58}
        & 33.01\phantom{.}\scalebox{0.72}{$\pm$0.42}
        & 45.44\phantom{.}\scalebox{0.72}{$\pm$0.72}
        & 32.80\phantom{.}\scalebox{0.72}{$\pm$0.86}
        & 31.54\phantom{.}\scalebox{0.72}{$\pm$0.37}
        & 50.05\phantom{.}\scalebox{0.72}{$\pm$0.68} \\

        \quad (1.2) w/ Vanilla SAM
        & 40.28\phantom{.}\scalebox{0.72}{$\pm$0.77}
        & 34.66\phantom{.}\scalebox{0.72}{$\pm$0.56}
        & 50.21\phantom{.}\scalebox{0.72}{$\pm$0.63}
        & 35.94\phantom{.}\scalebox{0.72}{$\pm$0.92}
        & 32.70\phantom{.}\scalebox{0.72}{$\pm$0.48}
        & 51.55\phantom{.}\scalebox{0.72}{$\pm$0.52} \\

        \midrule

        \rowcolor[HTML]{FEE090}
        \multicolumn{7}{c}{
            \hdrstrut\textit{\textbf{Scale Assignment}}
        } \\

        \quad (2.1) w/ Random Scale
        & 39.68\phantom{.}\scalebox{0.72}{$\pm$0.90}
        & 33.74\phantom{.}\scalebox{0.72}{$\pm$0.62}
        & 47.86\phantom{.}\scalebox{0.72}{$\pm$0.79}
        & 34.25\phantom{.}\scalebox{0.72}{$\pm$0.63}
        & 32.05\phantom{.}\scalebox{0.72}{$\pm$0.55}
        & 50.72\phantom{.}\scalebox{0.72}{$\pm$0.43} \\

        \quad (2.2) w/ Inverted Scale
        & 38.71\phantom{.}\scalebox{0.72}{$\pm$1.08}
        & 32.82\phantom{.}\scalebox{0.72}{$\pm$0.48}
        & 44.62\phantom{.}\scalebox{0.72}{$\pm$0.87}
        & 31.95\phantom{.}\scalebox{0.72}{$\pm$0.99}
        & 31.18\phantom{.}\scalebox{0.72}{$\pm$0.68}
        & 49.67\phantom{.}\scalebox{0.72}{$\pm$0.72} \\

        \quad (2.3) w/ Hard Top-$k$ Mask
        & 40.92\phantom{.}\scalebox{0.72}{$\pm$0.65}
        & 35.11\phantom{.}\scalebox{0.72}{$\pm$0.39}
        & 52.34\phantom{.}\scalebox{0.72}{$\pm$0.49}
        & 37.91\phantom{.}\scalebox{0.72}{$\pm$0.75}
        & 33.24\phantom{.}\scalebox{0.72}{$\pm$0.42}
        & 52.19\phantom{.}\scalebox{0.72}{$\pm$0.61} \\

        \midrule

        \rowcolor[HTML]{E0F3F8}
        \multicolumn{7}{c}{
            \hdrstrut\textit{\textbf{Update Metric}}
        } \\

        \quad (3.1) w/ ASAM Scale
        & 40.17\phantom{.}\scalebox{0.72}{$\pm$0.84}
        & 34.78\phantom{.}\scalebox{0.72}{$\pm$0.44}
        & 50.35\phantom{.}\scalebox{0.72}{$\pm$0.69}
        & 35.82\phantom{.}\scalebox{0.72}{$\pm$0.59}
        & 32.81\phantom{.}\scalebox{0.72}{$\pm$0.61}
        & 51.41\phantom{.}\scalebox{0.72}{$\pm$0.47} \\

        \quad (3.2) w/ Relative-Change
        & 39.91\phantom{.}\scalebox{0.72}{$\pm$0.97}
        & 34.22\phantom{.}\scalebox{0.72}{$\pm$0.50}
        & 48.34\phantom{.}\scalebox{0.72}{$\pm$0.57}
        & 34.51\phantom{.}\scalebox{0.72}{$\pm$0.88}
        & 32.28\phantom{.}\scalebox{0.72}{$\pm$0.46}
        & 50.98\phantom{.}\scalebox{0.72}{$\pm$0.76} \\

        \midrule

        \textbf{\ourmodel}
        & \textbf{41.51}\phantom{.}\scalebox{0.72}{$\pm$0.83}
        & \textbf{35.64}\phantom{.}\scalebox{0.72}{$\pm$0.49}
        & \textbf{53.50}\phantom{.}\scalebox{0.72}{$\pm$0.53}
        & \textbf{38.76}\phantom{.}\scalebox{0.72}{$\pm$0.81}
        & \textbf{33.63}\phantom{.}\scalebox{0.72}{$\pm$0.44}
        & \textbf{52.80}\phantom{.}\scalebox{0.72}{$\pm$0.58} \\

        \bottomrule
    \end{tabular}
    \end{adjustbox}

    \endgroup

    \vspace{-10pt}
\end{table*}

\subsection{Ablation Study}\label{subsec:ablation}

\Cref{tab:ablation} alters one component of \ourmodel at a time, keeping the training pipeline and
the merging operator fixed. The spread between variants exceeds $8$ points on Math $\rightarrow$
Science against under $3$ on Science $\rightarrow$ Code, so it is on the conflicting pair that the
perturbation geometry decides the outcome.

\begin{itemize}[leftmargin=*]

\item \textbf{Obs 4: adaptive flatness is necessary, and uniform sharpness regularization is
insufficient.} An isotropic constraint lifts the average over the six directions by $2.24$ points
above the unconstrained merge, confirming that flatness is the right instrument, yet it remains
$1.75$ points below \ourmodel and barely reaches the single-domain reference on the conflicting
pair. Spending the budget uniformly leaves the coordinates that carry the interference
under-protected.

\item \textbf{Obs 5: the gain comes from assigning large scales to high-update coordinates, not
from anisotropy per se.} Permuting the scale vector keeps its marginal distribution but destroys
its link to the update magnitudes, which alone places it below the isotropic constraint
everywhere, and reversing it falls below the unconstrained merge. A two-level mask trails
\ourmodel by $0.69$ points, so identifying those coordinates supplies most of the gain and the
continuous map the rest.

\item \textbf{Obs 6: what a parameter has moved, rather than how large it is, determines merging
interference.} Deriving the scales from the parameter magnitude reproduces the isotropic result to
within noise, so that quantity carries no information about what merging displaces, and
normalizing the displacement by it is worse than leaving the geometry isotropic. Both placements
fall short of \ourmodel, which uses the displacement alone, as task arithmetic predicts: a merged
coordinate is an absolute sum of task-vector components, unaffected by the scale of the parameter
receiving them.

\end{itemize}

\subsection{Where the Cross-Domain Gain Comes From}\label{subsec:gain}

\Cref{fig:analysis}(a) decomposes the improvement of \ourmodel over the target-only baseline into
the effect of update-aware training on the target alone (Single-domain USA) and the additional
benefit of cross-domain merging (Cross-domain USA). \Cref{fig:analysis}(b) turns to the quantity
that distinguished the two kinds of domain pair in~\Cref{sec:introduction}, now between experts
trained with \ourmodel rather than plain OPD.

\begin{figure*}[t]
\centering
\includegraphics[width=1.0\linewidth]{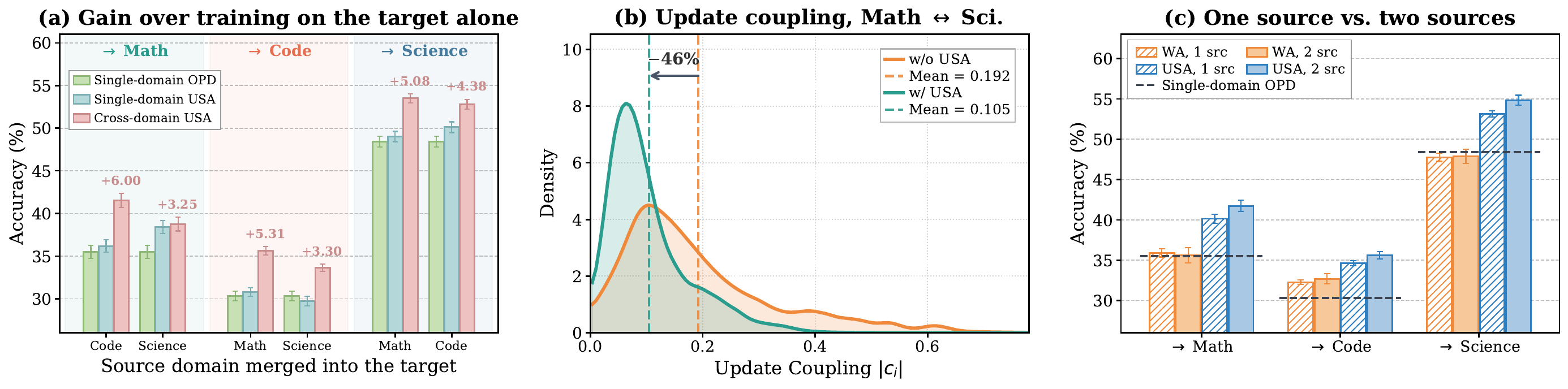}
\vspace{-1.9 em}
\caption{\textbf{Where the gain of \ourmodel comes from, and what happens when a second source
is added.}
(a) Each transfer direction under three training configurations, separating the gain of
update-aware training on the target from that of merging a source.
(b) Per-parameter update coupling $|c_i|$ of~\Cref{fig:motivation}(d) on
Math $\leftrightarrow$ Science, for experts trained by plain OPD and by \ourmodel.
(c) Both remaining domains merged into the target, against the single-source results
of~\Cref{tab:main}; \Cref{app:scaling} examines more. Error bars in (a) and (c) are standard
deviations over five runs.}
\label{fig:analysis}
\vspace{-0.5 em}
\end{figure*}

\begin{itemize}[leftmargin=*]

\item \textbf{Obs 7: the bulk of the gain comes from cross-domain knowledge, not from the training
mechanism itself.} Single-domain \ourmodel changes little and inconsistently, on two directions
falling slightly below the vanilla baseline. Once a source domain is merged in, Cross-domain
\ourmodel improves on the target-only reference in every direction, exceeding four points on four
of the six. The dominant factor is therefore the information merging brings in rather than a
stronger single-domain optimiser, and \Cref{eq:objective} pays off once several task vectors are
merged.

\item \textbf{Obs 8: \ourmodel leaves the two domains contending for markedly fewer parameters.}
Training the conflicting pair with \ourmodel reduces the mean update coupling by $46\%$. Both
curves are densities over the same support and enclose equal area, so the shift is not one being
flatter: mass moves from the coupled region towards zero, and the right tail formed by parameters
both domains move comparably is the part that thins, where the injection term
of~\Cref{eq:decomposition} contributes least.

\end{itemize}

\subsection{Scalability to More Source Domains}\label{subsec:scalability}

\Cref{fig:analysis}(c) merges both remaining domains into the target instead of one. Fixing the
target leaves exactly one choice of sources, so the three domains admit three such configurations,
each compared against the single-source result for the same target averaged over the two directions
of~\Cref{tab:main}. \ourmodel and Weight Average again differ only in how the experts were trained.

\begin{itemize}[leftmargin=*]

\item \textbf{Obs 9: a second source adds to the gain under \ourmodel and does not under plain
merging.} \ourmodel improves on its single-source result on all three targets and stays above the
target-only reference throughout, whereas Weight Average moves by less than the seed variation on
two targets and falls below its own single-source result on Math. Additional cross-domain knowledge
is thus available to be exploited rather than merely tolerated, once training has suppressed the
interference.

\end{itemize}

\paragraph{Further experiments.}
Four further studies are deferred to~\Cref{app:more-results} for space: merging a target with up
to five source domains~(\Cref{app:scaling}), sweeping the amplification cap $\alpha$
of~\Cref{eq:scale} and the base perturbation radius $\rho$ of~\Cref{eq:objective}
(\Cref{app:hparam}), replacing the merging operator with TIES, DARE and
AdaMerging~(\Cref{app:merger}), and repeating the comparison in a retrieval-based search-agent
environment~(\Cref{app:search}), in all of which \ourmodel stays strongest.

\section{Related Work}\label{sec:related}

\paragraph{On-policy Distillation.}
On-policy distillation~\citep{opdsurvey,gu2024minillm,opd} supervises the student token by token on
its own trajectories, removing the distribution mismatch of offline distillation, and has been
extended to self-supervised, sparse-reward, and multi-turn agentic
settings~\citep{opsd,srpo,sod}. Each run is nonetheless optimised alone, so nothing in its objective
anticipates coexistence with another domain's weights.

\paragraph{Model Merging.}
Merging~\citep{Survery_ModelMerging_2024} combines parameters instead of outputs, which works
because checkpoints from a common initialisation share a low-loss
region~\citep{frankle2020linear}, and task arithmetic~\citep{taskarithmetic} turns training
displacement into a vector that can be scaled and summed. Since these overlap, operators reduce
interference before summing~\citep{ties,dare,adamerging}. All act once the experts exist,
rearranging vectors formed without regard for the merge, whereas \ourmodel shapes them during
training; see~\Cref{app:related}.

\section{Conclusion}\label{sec:conclusion}
Merging independently distilled agentic experts helps on some domain pairs and harms others, and
the harm is decided by the parameters both domains update comparably, before any operator is
applied. \ourmodel therefore allocates the flatness budget of sharpness-aware training by update
magnitude, so the parameters carrying most of a merge's displacement are trained to tolerate it.
It attains the best result in every transfer direction we test and turns negative transfer into a
gain.

\bibliography{conference}
\bibliographystyle{conference}

\newpage
\appendix

\crefalias{section}{appendix}
\crefalias{subsection}{appendix}
\crefalias{subsubsection}{appendix}

\section*{Appendix}
\addcontentsline{toc}{section}{Appendix}

\section{Limitations}\label{app:limitations}

Two aspects of our experimental scope are worth stating. Every model belongs to the Qwen3 family,
which we chose because it is openly released, stable, published at many sizes and widely enough
used for our numbers to be placed against existing ones, so that the comparison across ten
baselines stays fair; a change of backbone is left to future work. The teachers of our main
experiments are moreover all $14$B models and we vary the student instead, since holding the
teacher scale fixed keeps each result differing from the others in the student alone, and how the
teacher scale interacts with the effect reported here is likewise left to future work.

\section{Related Work}\label{app:related}

\Cref{sec:related} introduces the two lines of work this paper builds on and states the limitation
of each one that motivates \ourmodel: on-policy distillation optimises every domain in isolation and
never anticipates that its weights will later be merged, while merging operators act only once the
experts already exist and therefore cannot change how their task vectors were formed. This appendix
gives the full account of both lines of work.

\subsection{On-policy Distillation.}\label{app:related-opd}

On-policy distillation~\citep{opdsurvey,opcd,gad,hybridOPD,scope,tip,chen2026soda} supervises the
student at the token level along trajectories the student itself produces, which removes the
distribution mismatch that makes offline distillation degrade once the student drifts away from the
teacher's data. \citet{gu2024minillm} casts the problem as reverse KL minimisation under the
student distribution, and \citet{opd} places on-policy and off-policy training in a single family
indexed by the divergence being minimised. Later analyses characterise what the objective actually
transfers: \citet{yang2026learning} reads it as KL-regularised reinforcement learning whose
per-token rewards are supplied implicitly by the teacher, while \citet{li2026rethinking} finds that
the gain comes from aligning the student on the states it visits, so that how much transfers depends
on how compatible the two reasoning styles are.

The same machinery has been turned inward, with the student supervising itself when no stronger
teacher is available~\citep{opsd,opsd2,sdpo,sdft,dqchen}, and outward, as a way of supplying the
dense signal that sparse-reward reinforcement learning
lacks~\citep{srpo,sdrlvr,bousselham2025vold,wang2026openclaw,skillsd}. The setting has also been
extended from single-turn responses to the multi-turn agents that this paper studies, where the
student interleaves reasoning with tool calls and is supervised step by step along trajectories
whose environment observations it did not itself produce~\citep{sod}. What this line of work has in
common is that each run is considered on its own: the objective is defined for a single teacher and
a single domain, and nothing in it anticipates that the resulting weights will later have to coexist
with those of another domain.

\subsection{Model Merging.}\label{app:related-merging}

Combining several specialised models by averaging their predictions~\citep{dong2020survey}
multiplies inference cost and does not carry over to language models, whose generated text admits no
meaningful average. Model merging~\citep{Survery_ModelMerging_2024} avoids both problems by
combining the parameters instead of the outputs, and it is workable because checkpoints fine-tuned
from a common initialisation tend to lie in a connected low-loss
region~\citep{1996Weight,10.5555/3327546.3327556,frankle2020linear}, so interpolating between them
need not destroy what either has learned. Averaging the corresponding weights is already effective
on its own~\citep{wa}, and task
arithmetic~\citep{taskarithmetic,task-arithmetic-lora} sharpens this into an algebra by treating the
displacement a model undergoes during fine-tuning as a vector that can be scaled, added, or
subtracted to compose and remove capabilities.

Because these vectors overlap, subsequent operators reduce the resulting interference before
summing~\citep{ji2026sharpness}, by trimming low-magnitude entries and reconciling conflicting
signs~\citep{ties}, by dropping and rescaling a large random fraction of the
entries~\citep{dare}, or by learning the coefficients on unlabelled test data~\citep{adamerging}.
More recent operators refine what a conflict is taken to be: \citet{gargiulo2025task} argues that
treating a task vector as a flat list of coordinates discards the structure of the network and
measures interference between layerwise singular directions instead, while \citet{sun2025cat}
removes the conflicting components of each vector before summing, projecting them out of linear
weights and masking them in normalisation parameters. The returns are nonetheless bounded:
\citet{wang2025model} reports that the benefit of adding experts decays as their number grows, so
most of what merging can deliver is already delivered by the first few. All of these operators,
however, are applied after the experts exist and can only rearrange task vectors that were produced
without any regard for the merge; \ourmodel intervenes earlier, shaping those vectors while each
expert is still being trained.

\section{Analysis of the Merging Interference}\label{app:proof}

This appendix supplies the material that \Cref{subsec:perturbation} defers. We derive the
interference that merging inflicts on a domain and prove the bound of \Cref{prop:bound}
(\Cref{app:proof-setup}), show that the training objective of \ourmodel controls exactly the
curvature factor of that bound (\Cref{app:proof-lemma}), and characterize when redistributing the
perturbation budget according to the update magnitudes tightens the estimate
(\Cref{app:proof-prop}). We are explicit throughout about which steps are exact and which are
assumptions.

\subsection{Setup and proof of the interference bound.}\label{app:proof-setup}

Let $D_t$ be the target domain. By \Cref{eq:displacement}, merging evaluates $D_t$ at
$\theta_t + \delta_t$ rather than at its own solution $\theta_t$, so the merging interference
of \Cref{subsec:perturbation} is the excess loss
\begin{equation}
\Xi_t \;=\; \mathcal{L}_t(\theta_t + \delta_t) - \mathcal{L}_t(\theta_t) ,
\label{eq:interference}
\end{equation}
and a positive $\Xi_t$ means the merged point sits higher on the loss surface of $D_t$ than the
model $D_t$ had trained on its own, which is the local surrogate through which we analyse the
accuracy degradation of \Cref{fig:motivation}(b).

Expanding $\mathcal{L}_t$ around $\theta_t$ to second order,
\begin{equation}
\Xi_t \;=\; \nabla_\theta \mathcal{L}_t(\theta_t)^\top \delta_t
\;+\; \tfrac{1}{2}\, \delta_t^\top H_t\, \delta_t
\;+\; \mathcal{O}\!\left(\|\delta_t\|_2^3\right) ,
\qquad H_t = \nabla_\theta^2 \mathcal{L}_t(\theta_t) .
\label{eq:taylor}
\end{equation}
Independent training drives $D_t$ to a local minimizer of its own objective, so
$\nabla_\theta \mathcal{L}_t(\theta_t) \approx 0$ and the linear term drops out, leaving
the curvature term as the leading contribution:
\begin{equation}
\Xi_t \;\approx\; \tfrac{1}{2}\, \delta_t^\top H_t\, \delta_t .
\label{eq:quadratic}
\end{equation}
Being a quadratic form, \Cref{eq:quadratic} grows both when the displacement is large and when
the loss surface is sharply curved along the direction in which the displacement points.

\begin{proof}[Proof of \Cref{prop:bound}]
Put $u = S^{-1}\delta_t$, so that $\delta_t = S u$ and, since $S$ is diagonal and hence
symmetric, $\delta_t^\top H_t \delta_t = u^\top (S H_t S) u$. Because $\theta_t$ is a local
minimizer we have $H_t \succeq 0$, hence $S H_t S \succeq 0$ and the quadratic form is
non-negative, so the Rayleigh quotient gives
$0 \le u^\top (S H_t S) u \le \lambda_{\max}(S H_t S)\|u\|_2^2$, where $\lambda_{\max}(\cdot)$ is
the largest eigenvalue. The local Lipschitz continuity of the Hessian makes the remainder in
\Cref{eq:taylor} cubic, and substituting $\|u\|_2 = \|S^{-1}\delta_t\|_2$ into
\Cref{eq:quadratic} and taking absolute values yields \Cref{eq:bound}.
\end{proof}

Two special cases are worth recording. Setting $S = I$ makes the change of variables the identity
and recovers
$|\Xi_t| \le \tfrac{1}{2}\lambda_{\max}(H_t)\|\delta_t\|_2^2 + \mathcal{O}(\|\delta_t\|_2^3)$,
the isotropic estimate that the standard sharpness analyses of model merging state. Replacing $S$
by $cS$ for any $c > 0$ multiplies $\lambda_{\max}(S H_t S)$ by $c^2$ and divides
$\|S^{-1}\delta_t\|_2^2$ by $c^2$, so the right-hand side of \Cref{eq:bound} is scale invariant
and only the relative profile of $S$ across coordinates can affect it.

\subsection{Derivation of the closed-form perturbation.}\label{app:proof-perturbation}

This subsection fills in the steps that the main text (from \Cref{eq:objective} to
\Cref{eq:perturbation}) compresses into one sentence. Everything below is a first-order argument
analogous to the one in the original SAM paper~\citep{foret2020sharpness}, the only difference
being that the isotropic ball $\|\epsilon\|_2 \le \rho$ is replaced by the ellipsoid
$\|S_k^{-1}\epsilon\|_2 \le \rho$.

\paragraph{Starting point.}
Each domain $k$ solves the min-max objective of \Cref{eq:objective}, reproduced here for
convenience:
\begin{equation}
\min_{\theta} \;\; \max_{\|S_k^{-1}\epsilon\|_2 \le \rho} \;\; \mathcal{L}_k(\theta + \epsilon) .
\tag{\ref{eq:objective}}
\end{equation}
The inner maximization asks for the worst-case perturbation within the ellipsoid, and the outer
minimization then drives the parameters towards a region where that worst case is mild.

\paragraph{Step 1: first-order approximation of the inner problem.}
Expand $\mathcal{L}_k(\theta + \epsilon)$ to first order about $\theta$:
\begin{equation}
\mathcal{L}_k(\theta + \epsilon)
\;\approx\; \mathcal{L}_k(\theta)
\;+\; \epsilon^\top \nabla_\theta \mathcal{L}_k(\theta) .
\label{eq:taylor-first}
\end{equation}
Because $\mathcal{L}_k(\theta)$ does not depend on $\epsilon$, the inner maximization reduces to
\begin{equation}
\hat{\epsilon}_k
\;=\; \arg\max_{\|S_k^{-1}\epsilon\|_2 \le \rho} \;\; \epsilon^\top \nabla_\theta \mathcal{L}_k(\theta) .
\label{eq:inner-linear}
\end{equation}

\paragraph{Step 2: change of variables.}
The diagonal matrix $S_k = \mathrm{diag}(s_k)$ has entries $s_k(i) \ge 1 > 0$, so it is
invertible. Setting $\epsilon = S_k v$, the constraint becomes
$\|S_k^{-1} S_k v\|_2 = \|v\|_2 \le \rho$, and the objective becomes
\begin{equation}
\max_{\|v\|_2 \le \rho} \; v^\top S_k \nabla_\theta \mathcal{L}_k(\theta) ,
\label{eq:inner-ball}
\end{equation}
which is the maximization of a linear function over an isotropic ball.

\paragraph{Step 3: closed-form solution via Cauchy--Schwarz.}
By the Cauchy--Schwarz inequality,
$v^\top (S_k \nabla_\theta \mathcal{L}_k(\theta)) \le \|v\|_2 \cdot \|S_k \nabla_\theta
\mathcal{L}_k(\theta)\|_2$, with equality when $v$ is parallel to
$S_k \nabla_\theta \mathcal{L}_k(\theta)$ and $\|v\|_2 = \rho$:
\begin{equation}
v^\star
\;=\; \rho\,\frac{S_k \nabla_\theta \mathcal{L}_k(\theta)}
{\left\| S_k \nabla_\theta \mathcal{L}_k(\theta) \right\|_2} .
\label{eq:vstar}
\end{equation}
Mapping back through $\hat{\epsilon}_k = S_k v^\star$:
\begin{equation}
\hat{\epsilon}_k
\;=\; \rho\,\frac{S_k^2\, \nabla_\theta \mathcal{L}_k(\theta)}
{\left\| S_k \nabla_\theta \mathcal{L}_k(\theta) \right\|} ,
\tag{\ref{eq:perturbation}}
\end{equation}
which is \Cref{eq:perturbation} of the main text.

\paragraph{Special case: isotropic SAM.}
When $\alpha = 1$, we have $S_k = I$ and \Cref{eq:perturbation} reduces to
$\hat{\epsilon} = \rho\, \nabla_\theta \mathcal{L}_k(\theta) /
\| \nabla_\theta \mathcal{L}_k(\theta) \|$,
which is the standard SAM perturbation~\citep{foret2020sharpness}. The derivation above
therefore specializes to the original one in the isotropic case, confirming that \ourmodel is a
strict generalization.

\paragraph{Update rule.}
Given $\hat{\epsilon}_k$, each step after the warm-up evaluates the gradient at the perturbed
point $\theta + \hat{\epsilon}_k$ and takes a descent step:
\begin{equation}
\theta \;\leftarrow\; \theta \;-\; \eta\, \nabla_\theta \mathcal{L}_k(\theta + \hat{\epsilon}_k) ,
\label{eq:update-rule}
\end{equation}
where $\eta$ is the learning rate. The perturbation $\hat{\epsilon}_k$ is never applied to the
parameters; its sole role is to move the evaluation point for the gradient so that the descent step
favours flat directions weighted by $S_k$.

\subsection{What the objective of \ourmodel controls.}\label{app:proof-lemma}

The next question is which quantity \Cref{eq:objective} actually minimizes. The answer holds to
second order in $\rho$ and requires no assumption beyond the expansion itself.

\begin{lemma}[Ellipsoidal sharpness]
\label{lem:sharpness}
Let $\mathcal{L}_k$ be twice differentiable at $\theta$ with a locally Lipschitz Hessian, let
$H_k = \nabla_\theta^2 \mathcal{L}_k(\theta)$, and suppose $\theta$ is a stationary local
minimizer of $\mathcal{L}_k$, so that $\nabla_\theta \mathcal{L}_k(\theta) = 0$ and
$H_k \succeq 0$. Then, to second order in $\rho$, the inner maximization of
\Cref{eq:objective} satisfies
\begin{equation}
\max_{\|S_k^{-1}\epsilon\|_2 \le \rho} \mathcal{L}_k(\theta + \epsilon) - \mathcal{L}_k(\theta)
\;=\; \tfrac{1}{2}\rho^2\, \lambda_{\max}\!\left(S_k H_k S_k\right)
\;+\; \mathcal{O}(\rho^3) .
\label{eq:lemma}
\end{equation}
\end{lemma}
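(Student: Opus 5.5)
The plan is to reuse the change of variables $\epsilon = S_k v$ from \Cref{app:proof-perturbation}, now keeping the second-order term of the expansion instead of the first. Under this substitution the ellipsoid $\|S_k^{-1}\epsilon\|_2 \le \rho$ becomes the isotropic ball $\|v\|_2 \le \rho$, so the left-hand side of \Cref{eq:lemma} equals $\max_{\|v\|_2\le\rho} \mathcal{L}_k(\theta + S_k v) - \mathcal{L}_k(\theta)$.

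First, I would expand $\mathcal{L}_k(\theta + S_k v)$ about $\theta$ exactly as in \Cref{eq:taylor}. The gradient term vanishes because $\theta$ is stationary by hypothesis. The local Lipschitz continuity of the Hessian bounds the remainder by $\tfrac{L}{6}\|S_k v\|_2^3 \le \tfrac{L}{6}\alpha^3\rho^3$, since the entries of $S_k$ lie in $[1,\alpha]$. This gives, uniformly over the ball,
\[
\mathcal{L}_k(\theta + S_k v) - \mathcal{L}_k(\theta) \;=\; \tfrac12\, v^\top (S_k H_k S_k)\, v \;+\; r(v), \qquad |r(v)| \le C\rho^3 ,
\]
with $C$ depending only on $L$ and $\alpha$. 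This holds once $\alpha\rho$ is within the neighbourhood where the Lipschitz bound applies.

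Second, I would maximize the quadratic part over $\|v\|_2\le\rho$. Since $H_k \succeq 0$, the matrix $S_k H_k S_k$ is symmetric positive semidefinite, as in the proof of \Cref{prop:bound}. The Rayleigh quotient then gives $\max_{\|v\|_2\le\rho} \tfrac12 v^\top S_k H_k S_k v = \tfrac12\rho^2\lambda_{\max}(S_k H_k S_k)$, attained at $\rho$ times a top eigenvector.

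Third, I would pass the uniform remainder bound through the maximum. For functions $f,g$ on the same compact set, $|\max f - \max g| \le \sup|f-g|$. The maximum of the full expression therefore differs from $\tfrac12\rho^2\lambda_{\max}(S_k H_k S_k)$ by at most $C\rho^3$, which is \Cref{eq:lemma}.

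The main obstacle is this third step, specifically keeping the remainder uniform over the ellipsoid rather than only pointwise. This is where I need $S_k$ bounded, with entries at most $\alpha$, so that the ellipsoid sits inside a ball of radius $\alpha\rho$. It is also why the $\mathcal{O}(\rho^3)$ constant hides a factor $\alpha^3$. I would state that dependence explicitly, since it is consistent with the remark that $\alpha$ must cap the amplification.
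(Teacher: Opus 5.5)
Your proposal is correct and follows the paper's own proof step for step: the substitution $\epsilon = S_k v$ turns the ellipsoid into the ball $\|v\|_2 \le \rho$, stationarity removes the gradient term in the second-order expansion, and a Rayleigh-quotient argument on the positive semidefinite matrix $S_k H_k S_k$ gives $\tfrac{1}{2}\rho^2\lambda_{\max}(S_k H_k S_k)$. Your explicit uniform remainder bound $\tfrac{L}{6}\alpha^3\rho^3$, combined with $|\max f - \max g| \le \sup|f-g|$, makes rigorous what the paper compresses into ``collecting the remainder as $\mathcal{O}(\rho^3)$''; the boundedness of $S_k$ that you flag as the main obstacle is in fact automatic for any fixed diagonal $S_k$, with $\max_i s_k(i)$ playing the role of $\alpha$.
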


\begin{proof}
Substitute $\epsilon = S_k v$. Since $S_k$ is diagonal with strictly positive entries it
is invertible, and the constraint $\|S_k^{-1}\epsilon\|_2 \le \rho$ becomes
$\|v\|_2 \le \rho$, so the change of variables is a bijection between the two feasible
sets. Expanding to second order and using stationarity,
\begin{equation}
\mathcal{L}_k(\theta + S_k v) - \mathcal{L}_k(\theta)
= \tfrac{1}{2}\, v^\top \!\left(S_k^\top H_k S_k\right) v + \mathcal{O}(\|v\|_2^3)
= \tfrac{1}{2}\, v^\top \!\left(S_k H_k S_k\right) v + \mathcal{O}(\|v\|_2^3) ,
\label{eq:lemma-proof}
\end{equation}
where $S_k^\top = S_k$ because $S_k$ is diagonal. Maximizing the quadratic form over the
ball $\|v\|_2 \le \rho$ is a Rayleigh-quotient problem whose optimum is attained at
$v = \rho u_{\max}$, with $u_{\max}$ a unit top eigenvector of $S_k H_k S_k$, giving
$\tfrac{1}{2}\rho^2 \lambda_{\max}(S_k H_k S_k)$, which is non-negative because
$H_k \succeq 0$ at a local minimizer and hence $S_k H_k S_k \succeq 0$. Collecting the remainder as
$\mathcal{O}(\rho^3)$ yields \Cref{eq:lemma}.
\end{proof}

Two consequences are worth stating plainly. First, at $\alpha = 1$ we have $S_k = I$ and
\Cref{eq:lemma} reduces to the familiar isotropic sharpness
$\tfrac{1}{2}\rho^2 \lambda_{\max}(H_k)$, so the objective of \ourmodel is a strict
generalization. Second, and this is the point \Cref{subsec:perturbation} appeals to, the
quantity being suppressed is not the curvature $H_k$ itself but the curvature \emph{as seen
through} $S_k$, which is the curvature factor of \Cref{eq:bound} evaluated at $S = S_k$.
Enlarging $s_k(i)$ inflates the $i$-th row and column of $S_k H_k S_k$, so coordinate $i$
contributes more to the eigenvalue being minimized and its curvature is driven down harder. The
perturbation geometry is therefore the mechanism by which the flatness budget is allocated across
coordinates.

\subsection{When the reweighting helps.}\label{app:proof-prop}

\Cref{eq:bound} is a family of \emph{upper} bounds on the same quantity $\Xi_t$, so a smaller
right-hand side does not by itself certify smaller interference and the choice of $S$ has to be
made deliberately. A two-dimensional example shows what is at stake. With $H_t = I$,
$\delta_t = (1, 0)^\top$ and $S = \mathrm{diag}(a, b)$, the right-hand side of \Cref{eq:bound}
equals $\max(a^2, b^2)/a^2$, which exceeds the isotropic value $1$ precisely when $b > a$, in
which case it is $b^2/a^2$. The reason is visible
in the two factors, since enlarging $S$ along a coordinate that is \emph{not} displaced buys
nothing in the displacement factor while still inflating the curvature factor. What matters is
therefore not that $S$ is non-uniform but that its large entries sit on the coordinates where
$|\delta_t(i)|$ is large, which is precisely how \Cref{eq:scale} is built. The following makes
this precise.

\begin{proposition}[Condition for a tighter estimate]
\label{prop:alignment}
Let $S = \mathrm{diag}(s)$ with $s(i) \ge 1$ for all $i$, write
$\hat{H} = S H_t S$, and suppose the curvature factor is unchanged by the reweighting,
$\lambda_{\max}(\hat{H}) = \lambda_{\max}(H_t)$. Then the estimate of \Cref{eq:bound} is
at least as tight as the isotropic one, with equality only if $s(i) = 1$ on every
coordinate where $\delta_t(i) \neq 0$. In general, writing $B_S$ for the quadratic leading
term of the right-hand side of \Cref{eq:bound}, that is for
$\tfrac{1}{2}\lambda_{\max}(S H_t S)\,\|S^{-1}\delta_t\|_2^2$ with the cubic remainder
excluded,
\begin{equation}
\frac{B_S}{B_I} \;=\; \frac{\lambda_{\max}(\hat{H})}{\lambda_{\max}(H_t)}
\cdot \frac{\sum_i \delta_t(i)^2 / s(i)^2}{\sum_i \delta_t(i)^2} ,
\label{eq:ratio}
\end{equation}
so the estimate improves precisely when the reduction of the second factor outweighs the
growth of the first.
\end{proposition}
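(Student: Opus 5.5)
The plan is to start from the definition of the leading term supplied by \Cref{prop:bound}, namely $B_S = \tfrac{1}{2}\lambda_{\max}(S H_t S)\,\|S^{-1}\delta_t\|_2^2$, with $B_I = \tfrac{1}{2}\lambda_{\max}(H_t)\,\|\delta_t\|_2^2$ obtained by setting $S = I$. Because $S = \mathrm{diag}(s)$ is diagonal, the squared norm separates coordinate by coordinate: $\|S^{-1}\delta_t\|_2^2 = \sum_i \delta_t(i)^2/s(i)^2$. Taking the quotient $B_S/B_I$ then cancels the common factor $\tfrac{1}{2}$ and gives \Cref{eq:ratio} immediately. This presumes $B_I > 0$, that is $\delta_t \neq 0$ and $\lambda_{\max}(H_t) > 0$. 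I would state this non-degeneracy explicitly. When $\delta_t = 0$ both sides vanish and the claim is trivial. When $H_t = 0$, also $S H_t S = 0$, so both bounds are zero and the comparison is again trivial.

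For the first claim I substitute the hypothesis $\lambda_{\max}(\hat H) = \lambda_{\max}(H_t)$ into \Cref{eq:ratio}. The first factor becomes $1$, so $B_S/B_I = \sum_i \delta_t(i)^2 s(i)^{-2} \big/ \sum_i \delta_t(i)^2$. Since $s(i) \ge 1$, each term satisfies $\delta_t(i)^2 s(i)^{-2} \le \delta_t(i)^2$, so the ratio is at most $1$, which is the "at least as tight" statement.

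For the equality case, observe that the difference between the denominator and the numerator is $\sum_i \delta_t(i)^2\,(1 - s(i)^{-2})$. Every summand is non-negative, so the sum is zero exactly when each summand is zero. That happens exactly when $s(i) = 1$ on every coordinate with $\delta_t(i) \neq 0$. This gives the "equality only if" direction and in fact an iff.

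The closing sentence then reads off \Cref{eq:ratio} as a product of a curvature ratio, which is at least... well, it may exceed $1$, and a displacement ratio, which is at most $1$ by the same termwise argument. $B_S < B_I$ holds exactly when that product is below $1$.

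The only step needing care, and the nearest thing to an obstacle, is the degenerate case $B_I = 0$, together with making clear that the comparison concerns only the quadratic leading terms, with the cubic remainder excluded as the statement specifies. I would also remark, without relying on it, that in general $\lambda_{\max}(S H_t S) \ge \lambda_{\max}(H_t)$ need not hold. It does hold, for instance, via the Rayleigh quotient at $S^{-1}u_{\max}$ when $s \ge 1$. This explains why the hypothesis on the curvature factor is stated as an assumption rather than derived.
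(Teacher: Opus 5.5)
Your proof is correct and follows the paper's own argument: divide the two leading terms using $\|S^{-1}\delta_t\|_2^2=\sum_i \delta_t(i)^2/s(i)^2$, then read the second factor of \Cref{eq:ratio} as a $\delta_t^2$-weighted average of $s(i)^{-2}\le 1$, which gives both the inequality and the equality case. Your closing aside contradicts itself and should be fixed: for $H_t\succeq 0$ and $s\ge 1$, the Rayleigh quotient at $S^{-1}u_{\max}$ gives $\lambda_{\max}(SH_tS)\ge\lambda_{\max}(H_t)$ always, so the curvature factor is never below $1$ and the hypothesis is the best case rather than a guard against an inequality that might fail (and, as a minor point, the equality characterization needs $\lambda_{\max}(H_t)>0$, since for $H_t=0$ the two bounds coincide for every $s$).
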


\begin{proof}
The first claim is immediate from \Cref{eq:ratio}: if the leading factor equals one, the
second is a weighted average of $s(i)^{-2} \le 1$ with weights $\delta_t(i)^2$, hence at
most one, with equality only if $s(i) = 1$ whenever $\delta_t(i) \neq 0$.
\Cref{eq:ratio} itself follows by writing out both sides of \Cref{eq:bound} for $S$ and
for $I$ and dividing, using
$\|S^{-1}\delta_t\|_2^2 = \sum_i \delta_t(i)^2 / s(i)^2$.
\end{proof}

\Cref{eq:ratio} is the statement we rely on, and it says something narrower than the claim
that flatter is always better. The second factor is a $\delta_t^2$-weighted average of
$s(i)^{-2}$, so it is reduced by putting large $s(i)$ where $\delta_t(i)^2$ is large, whereas
raising $s(i)$ where $\delta_t(i)^2 \approx 0$ leaves that factor untouched and pays only the
cost in the first one. Since \Cref{eq:ratio} is a product, the second factor cannot be shrunk
indefinitely, because pushing $s(i)$ up eventually inflates $\lambda_{\max}(\hat{H})$, and the
budget can therefore only be \emph{redistributed} across coordinates rather than uniformly
reduced. This is the formal reason why the amplification in \Cref{eq:scale} is capped at
$\alpha$ rather than left unbounded. The scale invariance in \Cref{prop:bound} says the same
thing from another angle, because multiplying every $s(i)$ by a constant leaves \Cref{eq:bound}
untouched and only the relative profile of $s$ can matter, which is precisely the property
\Cref{subsec:locating} exploits when it argues that the warm-up has to recover the relative
profile of the update magnitudes rather than their eventual size.

\subsection{The coupling condition.}\label{app:proof-assumption}

\Cref{eq:ratio} asks for $s$ to be large where $|\delta_t|$ is large. Part of this is settled by
\Cref{eq:decomposition} without any further hypothesis, because the shrinkage term
$-(1 - \lambda_t)\Delta_t$ is proportional to the target's own task vector and is therefore largest
precisely on the coordinates that \Cref{eq:scale} amplifies, so the profile $s_k$ is aligned with at
least that part of the displacement by construction. The injection term
$\sum_{j \neq t} \lambda_j \Delta_j$ is formed by the other domains and admits no such guarantee,
and how much it contributes on those same coordinates is what distinguishes one domain pair from
another. \Cref{fig:motivation}(d) provides a proxy for this, and we state the condition explicitly.

\begin{assumption}[Cross-domain update coupling]
\label{asm:coupling}
On the domain pairs of interest, the two terms of \Cref{eq:decomposition} are of comparable
magnitude on the coordinates ranked highest by $m_t$: a non-negligible share of the mass of
$\big|\sum_{j \neq t} \lambda_j \Delta_j(i)\big|$ sits on those coordinates.
\end{assumption}

\Cref{fig:motivation}(d) is empirical evidence consistent with \Cref{asm:coupling}, since it
measures per parameter whether two domains update the same coordinates
with comparable magnitude and shows the mass shifted away from zero exactly on the pairs where
merging degrades performance. Two regimes follow. When the coupling is weak the injection term is
negligible on the update-salient coordinates and the displacement there is dominated by the
shrinkage term alone, which is the residual cost that weight averaging imposes on any domain.
When the coupling is strong the two terms are comparable in magnitude and the largest displacement
they can produce is correspondingly larger, so more interference is available to be removed and
\Cref{eq:ratio} admits a larger reduction. The assumption therefore does not decide whether
\Cref{eq:scale} points at the right coordinates, which \Cref{eq:decomposition} already settles, but
characterizes how much there is to be gained by protecting them.

\subsection{Why the coupling statistic does not separate the two signed cases.}\label{app:coupling-magnitude}
\Cref{fig:motivation}(d) reports $|c_i|$, which is unchanged if either update is replaced by its
negative, and this is a deliberate choice rather than an oversight. We first grant what the
statistic does not do. Writing the merge with equal coefficients, the displacement imposed on the
target at coordinate $i$ is $\delta_t(i) = \tfrac{1}{2}(\Delta_s(i) - \Delta_t(i))$ by
\Cref{eq:displacement}, so the two extreme configurations at $|c_i| = 1$ behave in opposite ways:
when $\Delta_s(i) = \Delta_t(i)$ the displacement vanishes, and when $\Delta_s(i) = -\Delta_t(i)$
it equals $|\Delta_t(i)|$, the full size of the target's own update. A large $|c_i|$ therefore
does not by itself imply a large realised displacement, and we do not claim that it does.

What $|c_i|$ does control is the worst case over the signs, which is the quantity the
construction actually needs. Holding the two updates' sizes fixed and maximizing over the unknown
signs gives $\max |\delta_t(i)| = \tfrac{1}{2}(|\Delta_t(i)| + |\Delta_s(i)|)$, and at fixed
$|\Delta_t(i)|$ this bound increases with $|c_i|$, since a larger coupling means the second term
is closer in size to the first. The neighbourhood of \Cref{eq:objective} is symmetric about the
solution and asks a coordinate to tolerate a displacement of a given magnitude whichever way it
points, so a bound that does not depend on the signs is precisely the right input to it, and a
signed statistic would not be usable in any case: the scales of \Cref{eq:scale} are fixed during
a domain's own warm-up, when the other domain has not finished training and its signs do not yet
exist.

There is also an empirical reason not to read the signs as the risk variable. On the pairs we
study, sign agreement is not what separates the good cases from the bad ones: both configurations
in which the two domains write to the same coordinate, whether their updates agree in sign or
oppose each other, transfer worse than the configuration in which the updates are nearly
orthogonal and each coordinate is claimed by one domain. What the merged model appears to be
sensitive to is that a coordinate is written twice, not the relative direction of the two writes,
so we treat the two signed cases as one phenomenon rather than distinguishing them. We report
this as an observation about these domain pairs rather than as a general law, and it is the
reason the diagnostic in \Cref{fig:motivation}(d) measures how far the two updates overlap in
size rather than how their directions relate.

Two limits of the statistic remain, and we state them rather than work around them.
\Cref{fig:motivation}(d) aggregates over all coordinates, whereas \Cref{asm:coupling} conditions
on those ranked highest by $m_t$, so the panel is evidence consistent with the assumption rather
than a measurement of it. And a displacement of given size costs a different amount of loss
depending on the local curvature, which is exactly why \Cref{prop:bound} keeps the curvature
factor and why \ourmodel acts on the geometry rather than on the displacement.

\paragraph{Scope of the claims.}
For clarity we separate what is proved from what is assumed. \Cref{lem:sharpness} and
\Cref{prop:bound,prop:alignment} are exact given (i) twice differentiability, (ii)
near-stationarity of $\theta_t$ after independent training, and (iii) truncation of the
expansions at second order, which is the same footing as the standard sharpness analyses of
model merging, of which \Cref{eq:bound} at $S = I$ is the special case. The alignment between
$s_k$ and the shrinkage term of \Cref{eq:decomposition} is algebraic and requires no assumption.
What we do
\emph{not} claim is that $s_k \propto m_k$ optimizes \Cref{eq:ratio}, since the optimum of a
product of a spectral quantity and a weighted average admits no clean closed form and the
argument above only establishes the direction in which $s_k$ should vary. \Cref{asm:coupling}
is an empirical property rather than a derived one, it bears on the size of the injection term
alone, and \Cref{fig:motivation}(d) is the evidence offered for it, with the limits of that
evidence set out in \Cref{app:coupling-magnitude}.

\section{Algorithm}\label{app:algorithm}

\Cref{alg:usa} summarizes the complete training and merging pipeline. Each domain is processed
independently; no cross-domain communication occurs until the task vectors are formed and merged
in the final step. The merging operator itself is plain weighted task arithmetic, inherited from
prior work and listed for completeness only, with the coefficients $\{\lambda_k\}$ fixed in
advance rather than learned.

\begin{algorithm}[h]
\caption{\ourmodel: Update-aware Sharpness-Aware Training for Cross-Domain Model Merging}
\label{alg:usa}
\KwIn{%
  Pre-trained model $\theta_0$;\\
  $K$ domain datasets $\{D_1, \ldots, D_K\}$;\\
  OPD objective $\mathcal{L}_k$ (\Cref{eq:opd});\\
  warm-up length $N$;\\
  reference fraction $p$;\\
  amplification cap $\alpha$;\\
  base perturbation radius $\rho$;\\
  merging coefficients $\{\lambda_k\}$.%
}
\KwOut{Merged model $\theta_{\mathrm{merged}}$.}

\vspace{0.5em}
\BlankLine
\textbf{Phase 1: Warm-up --- locate update-salient parameters (\Cref{subsec:locating})} \\
\For{each domain $k = 1, \ldots, K$ \textbf{in parallel}}{
    Initialize $\theta_k \leftarrow \theta_0$\;
    Train $\theta_k$ on $D_k$ for $N$ steps with the vanilla OPD objective $\mathcal{L}_k$ (\Cref{eq:opd})\;
    Compute update magnitudes: $m_k(i) = |\theta_k^{(N)}(i) - \theta_0(i)|$ for every parameter $i$\;
}

\vspace{0.5em}
\BlankLine
\textbf{Phase 2: Construct per-coordinate scales (\Cref{subsec:perturbation})} \\
\For{each domain $k = 1, \ldots, K$}{
    \For{each weight matrix $g$ in the model}{
        $D_k(g) \leftarrow Q_{1-p}\!\bigl(\{m_k(i) : i \in g\}\bigr)$\;
    }
    \For{each parameter $i$}{
        $s_k(i) \leftarrow 1 + (\alpha - 1)\,\min\!\bigl(m_k(i)\,/\,D_k(g_i),\; 1\bigr)$\;
    }
    $S_k \leftarrow \mathrm{diag}(s_k)$\;
}

\vspace{0.5em}
\BlankLine
\textbf{Phase 3: Update-aware SAM training (\Cref{subsec:perturbation})} \\
\For{each domain $k = 1, \ldots, K$ \textbf{in parallel}}{
    \For{each remaining training step}{
        Sample a batch of trajectories $\mathcal{B}_k \sim \pi_{\theta_k}$ and score it with the
        teacher, once\;
        Compute gradient $g_k \leftarrow \nabla_\theta \mathcal{L}_k(\theta_k; \mathcal{B}_k)$\;
        Form the worst-case perturbation:
        $\hat{\epsilon}_k \leftarrow \rho\, S_k^2\, g_k \,/\, \|S_k\, g_k\|$\;
        Update parameters: $\theta_k \leftarrow \theta_k - \eta\, \nabla_\theta \mathcal{L}_k(\theta_k + \hat{\epsilon}_k; \mathcal{B}_k)$\;
    }
    Form task vector: $\Delta_k \leftarrow \theta_k - \theta_0$\;
}

\vspace{0.5em}
\BlankLine
\textbf{Merging (\Cref{sec:preliminaries})} \\
$\theta_{\mathrm{merged}} \leftarrow \theta_0 + \sum_{k=1}^{K} \lambda_k\, \Delta_k$\;
\end{algorithm}

\section{Training Cost and Compatibility}\label{app:cost}

\subsection{Cost.}\label{app:cost-overhead}

Each optimization step of \ourmodel performs two forward-backward passes, one to form
$\hat{\epsilon}_k$ by \Cref{eq:perturbation} and one to take the gradient at
$\theta + \hat{\epsilon}_k$, which is the standard cost of a sharpness-aware update. The two passes
reuse a single batch of trajectories sampled once from the current policy, and the teacher scores
them once as well, so the update adds a second gradient computation without a second round of
generation. The
additional state is one copy of $\theta_0$ and one copy of $s_k$, both of which become read-only
once the warm-up ends, and no further state is introduced because \Cref{eq:magnitude} is evaluated
once and \Cref{eq:scale} is a fixed diagonal profile thereafter. Forming that profile costs one
quantile per weight matrix by \Cref{eq:reference} and is done a single time, which is negligible
against one training step. The warm-up itself optimizes
\Cref{eq:opd} unmodified, so it costs exactly the $N$ steps it consumes and nothing beyond them,
and $s_k$ is discarded when training ends since merging consumes task vectors alone.

The extra pass is nevertheless a small part of an OPD step, because such a step is dominated by
generation rather than by optimization. Before any gradient is taken the student has to roll out
long multi-turn trajectories in the agent environment and wait for a sandbox observation after every
tool call, and that generation phase consumes most of the wall-clock time of the step while the
forward-backward computation occupies a comparatively narrow slice of it. Doubling that slice
therefore moves the total by far less than it would in a conventional setting where the optimizer
dominates. Under the setting of \Cref{sec:experiment} the 1.7B student takes $44$ hours to train with
vanilla OPD and $50$ hours with \ourmodel, an increase of $13.6\%$, and the same comparison on the
4B student gives $11.8\%$. The relative overhead thus shrinks as the student grows, since a larger
student spends an even larger share of each step generating. Weighed against the gains reported
in \Cref{sec:experiment}, an overhead of roughly one part in eight is a cost we consider well spent.

\subsection{Compatibility with the merging pipeline.}\label{app:cost-scope}

\ourmodel leaves the surrounding pipeline intact. Domains are still trained in complete isolation
with no cross-domain communication, so they can be developed in parallel and added or removed one
at a time, and since \Cref{eq:objective} governs only how each domain is trained it remains
agnostic to the operator used later at merging time and to the direction in which the merge is
eventually formed. Each domain is therefore trained once and the resulting checkpoint serves every
merge it takes part in, whichever side of it the domain happens to be on.

This isolation is also what makes the overhead of the previous subsection worth paying, since the
alternative it replaces is more expensive in two respects. Training on a mixture of the domains
places their optimization signals in one run and lets them interfere, which is the effect
\Cref{sec:experiment} measures on the mix-data baselines, whereas \ourmodel never forms such a
mixture. A mixed run is moreover monolithic: a new domain changes the corpus and therefore requires
the whole model to be trained again, so the cost of accommodating $K$ domains is paid afresh every
time one of them is added or revised. Under \ourmodel a new domain costs one distillation run of
its own and leaves every existing student untouched, so the cost of an addition is independent of
how many domains are already in place. The merge that follows is a fixed number of arithmetic
operations over the task vectors, without gradients, rollouts or teacher queries, and takes minutes
against the days of \Cref{app:implementation}, so it does not enter the accounting in any practical
sense.

\section{Experimental Setup}\label{app:setup}

\subsection{Training Datasets.}\label{app:datasets}

\begin{table}[t]
    \centering
    \caption{
    \textbf{Summary of the training data of the main experiments.}
    Each domain is served by a single verifiable-reward dataset, used both as the RL corpus for
    that domain's teacher and as the prompt set for on-policy distillation, and the subsets are
    equally sized so that no domain is favoured by data volume when the task vectors are formed
    and merged. The three additional corpora that serve as extra sources in the scalability
    study are described in~\Cref{app:datasets} and are not listed here.
    }
    \vspace{1mm}
    \label{tab:dataset}

    \begingroup
    \setlength{\tabcolsep}{8pt}
    \renewcommand{\arraystretch}{1.08}
    \normalsize

    \begin{adjustbox}{max width=0.72\textwidth}
    \begin{tabular}{llc}
        \toprule

        {\footnotesize\textbf{Domain}}
        & {\footnotesize\textbf{Source}}
        & {\footnotesize\textbf{\#Samples}} \\

        \midrule

        Mathematical
        & DAPO-Math~\citep{yu2025dapo}
        & 4k \\

        Coding
        & Skywork-OR1 (Code)~\citep{he2025skywork}
        & 4k \\

        Scientific
        & MegaScience~\citep{fan2025megascience}
        & 4k \\

        \midrule

        \textbf{Total}
        &
        & \textbf{12k} \\

        \bottomrule
    \end{tabular}
    \end{adjustbox}

    \endgroup

    \vspace{-10pt}
\end{table}

Each domain is served by one dataset throughout, and \ourmodel uses no supervised fine-tuning data
at any point. The dataset first acts as the reinforcement learning corpus on which that domain's
Qwen3-14B teacher is optimized with GRPO, and it then supplies the prompts from which the student,
initialized at the released checkpoint $\theta_0$, samples its own trajectories during on-policy
distillation. Every source below therefore comes with programmatically verifiable answers, which is
what allows the same prompts to serve both stages: the reward for the teacher and the correctness
signal on the student's rollouts are computed by the same checker. We subsample $4$k prompts per
domain, summarized in~\Cref{tab:dataset}, and keep the three subsets equally sized so that no domain
enters the merge with a task vector inflated by data volume.

\par\smallskip
\noindent\textbf{Mathematical: DAPO-Math}~\citep{yu2025dapo}. DAPO-Math is the reinforcement
learning corpus released with the DAPO system and comprises roughly $17$k competition-style
mathematics problems collected from open sources. Every problem is paired with an integer final
answer, which makes the reward exactly verifiable by string match and avoids the noise that a
learned reward model or a judge would introduce. We draw our $4$k subset uniformly at random from
the deduplicated release.

\par\smallskip
\noindent\textbf{Coding: Skywork-OR1 (Code)}~\citep{he2025skywork}. Skywork-OR1 is a
reinforcement learning dataset spanning mathematics and code, of which we use the code partition
containing roughly $14$k programming problems drawn from competitive programming sources. Each
problem is accompanied by executable test cases, so correctness is determined by running the
generated program in a sandbox rather than by comparing text. We sample $4$k problems from this
partition and retain only those whose test harness executes successfully, resampling to keep the
subset size fixed.

\par\smallskip
\noindent\textbf{Scientific: MegaScience}~\citep{fan2025megascience}. MegaScience is a
large-scale post-training corpus for scientific reasoning, aggregating roughly $1.25$M instances
from open-source collections after decontamination and answer verification, and it covers physics,
chemistry and biology at university level. Its questions are open-ended and require applying
scientific principles and carrying out quantitative derivations rather than selecting among given
options. We draw $4$k instances uniformly at random, balanced across the three subject areas.

\par\smallskip
\noindent\textbf{Additional source domains.} The scalability study of \Cref{app:scaling}
enlarges the pool of source domains beyond the three above, and the three corpora described next
supply that extension. They are used as \emph{sources only}: no teacher is trained to be evaluated
on them, they appear in no transfer direction of \Cref{tab:main}, and no result in this paper is
reported on their benchmarks. They are therefore absent from \Cref{tab:dataset}, which summarizes
the data of the main experiments. Each is subsampled to $4$k instances, matching the three domains
above so that a task vector is not inflated by data volume, and each is filtered to the subset that
our Python interpreter can execute.

\par\smallskip
\noindent\textbf{Data Analysis: DataMind}~\citep{qiao2026datamind}.
DataMind is the training corpus released with the DataMind system and contains roughly 12k high-quality trajectories for generalist data-analytic agents, spanning diverse task categories and data formats such as CSV, XLSX, and structured databases.
Each trajectory records multi-turn reasoning interleaved with code execution and the resulting observations, making it naturally suited to tool-integrated post-training.
Since our environment exposes only a Python interpreter, we discard SQL-only trajectories and draw 4k instances uniformly at random from the remaining Python-executable subset, preserving the associated data files and execution traces.

\par\smallskip
\noindent\textbf{Financial: FinQA}~\citep{chen2021finqa}.
FinQA is a financial numerical reasoning corpus containing roughly 8k expert-annotated question--answer pairs grounded in approximately 2.8k financial reports, where each problem requires reasoning jointly over textual and tabular evidence.
Each instance is accompanied by supporting facts and a gold reasoning program whose execution produces the numerical answer, providing deterministic supervision without relying on a learned reward model or an LLM judge.
We use the official training split, remove duplicates and any instances overlapping with our evaluation sets, and uniformly sample 4k problems while retaining the original report context and executable reasoning annotations.

\par\smallskip
\noindent\textbf{Engineering: ERI}~\citep{naser2026eri}.
The Engineering Reasoning and Instruction (ERI) corpus is a large-scale instruction dataset for engineering reasoning whose public training split contains more than 45k instances across nine engineering disciplines and 55 subdomains, covering undergraduate, graduate, and professional difficulty levels.
Its tasks span several reasoning intents, including calculation and code-related problem solving, which naturally support numerical computation and equation solving with a Python interpreter.
To align the corpus with our tool-integrated setting, we retain only calculation and code-related instances and draw a 4k subset, approximately balanced across the nine engineering disciplines and difficulty levels.

\subsection{Benchmarks.}\label{app:benchmarks}

We provide detailed descriptions of the six benchmarks used throughout \Cref{sec:experiment}, two
per domain.

\subsubsection{Mathematics.}\label{app:benchmarks-math}

\noindent\textbf{AIME 2025.} The American Invitational Mathematics Examination (AIME) is a prestigious mathematics competition administered by the Mathematical Association of America (MAA). We use the 2025 edition, combining AIME I and AIME II, each containing 15 problems, for a total of 30 problems. All answers are integers in the range $[0, 999]$, which enables unambiguous automatic evaluation via exact match. The problems span advanced topics including algebra, number theory, combinatorics, and geometry, requiring multi-step reasoning chains and creative problem-solving strategies that go well beyond pattern matching. AIME I and AIME II were administered on February 6 and February 12, 2025, respectively. As a recent competition set, AIME 2025 can help reduce the risk of data contamination when evaluating models whose training data cutoff predates the release of these problems.

\par\smallskip
\noindent\textbf{HMMT Feb. 2026}~\citep{dekoninck2026matharena}. The Harvard--MIT Mathematics Tournament (HMMT) is a highly competitive high-school mathematics competition covering a broad range of advanced mathematical topics. We use the HMMT February 2026 benchmark released by MathArena, which is constructed from problems from the official February 2026 competition. MathArena extracted the problems from the official competition PDFs and manually verified and corrected the resulting transcriptions. Problems that were not suitable for final-answer evaluation, such as those requiring proofs or constructions, were excluded, resulting in a benchmark of 33 problems. The problems are categorized into algebra, number theory, combinatorics, and geometry, and each is associated with a gold final answer, enabling automatic evaluation through final-answer extraction. Because the benchmark is derived from a recent 2026 competition, it also provides a temporally fresh test set that can reduce contamination risk for models whose training data predates the competition. We evaluate on all 33 problems without any filtering. Since gold answers are frequently non-integer expressions such as fractions, radicals, and symbolic constants, a correct answer admits many syntactically different but mathematically identical forms; we therefore judge a prediction correct if it is mathematically equivalent to the gold answer rather than requiring a literal string match.

\subsubsection{Code.}\label{app:benchmarks-code}

\noindent\textbf{LiveCodeBench}~\citep{jainlivecodebench}. LiveCodeBench is a continuously updated benchmark for evaluating the coding capabilities of large language models. Unlike static benchmarks such as HumanEval, LiveCodeBench mitigates data contamination by continuously collecting new problems from competitive programming platforms (LeetCode, AtCoder, and Codeforces) and annotating each problem with its release date. This temporal annotation enables contamination-aware evaluation by restricting the test set to problems released after a model's training data cutoff. We use the v6 release, which contains 1,055 problems spanning May 2023 to April 2025. Following common practice, we evaluate on problems within a recent time window to reduce the risk of overlap with training data. The benchmark assesses multiple code-related capabilities including code generation, self-repair (debugging given execution feedback), code execution prediction, and test output prediction.

\par\smallskip
\noindent\textbf{NaturalCodeBench}~\citep{zhang-etal-2024-naturalcodebench}. NaturalCodeBench (NCB) is an application-driven code-generation benchmark designed to reflect the complexity and diversity of real-world programming requests. In contrast to benchmarks dominated by introductory algorithmic problems, NCB is constructed from natural user queries collected from online coding services. The benchmark comprises 402 high-quality problems in Python and Java across six domains: Software Engineering, Data Science, Algorithm and Data Structure, System Administration, Artificial Intelligence, and Front-End development. To support reliable execution-based evaluation for these realistic programming tasks, the authors introduce a semi-automated pipeline for constructing test cases, achieving more than a four-fold improvement in efficiency compared with fully manual construction. By evaluating generated programs against executable test cases, NaturalCodeBench provides a complementary assessment of practical code-synthesis capabilities beyond conventional competitive-programming or function-completion benchmarks. We evaluate on the Python \textit{dev} split, which covers task IDs 131--200 without gaps and therefore contains 70 problems, and score each generated program by executing it against the official test cases in an isolated sandbox. Before evaluation, we validate the split by running each problem's official reference solution against its own test cases, and we discard a problem if and only if the reference solution shipped with the benchmark fails them, which happens for the 12 problems with IDs 154, 160, 161, 165, 168, 170, 173, 175, 187, 188, 189 and 192. Every one of these failures is attributable to a defect in the released resources rather than to task difficulty, and they fall into two kinds. For nine of them the test cannot be executed at all, whatever program is supplied, because the data it reads was never released: eight (154, 161, 165, 168, 170, 173, 175, 189) depend on external input files absent from the distribution, and one (160) requires an external corpus that must be downloaded at runtime. For the remaining three the reference implementation is itself defective, two (187, 188) shipping solutions inconsistent with their own test cases and one (192) a syntactically invalid solution; here a failing reference shows only that the released solution does not pass, and we exclude these problems because they cannot be verified in a uniform environment, not because we claim no correct program could pass them. The remaining 58 problems form the evaluation set and are used without any further selection. The filter is mechanical, applied before any model was run and decided solely by the benchmark's own reference solutions, so it is independent of the outputs of the systems under study, and the same 58 problems are used for every method reported in this paper.

\subsubsection{Science.}\label{app:benchmarks-science}

\noindent\textbf{GPQA-Diamond}~\citep{rein2024gpqa}. GPQA (Graduate-Level Google-Proof Q\&A) is a challenging multiple-choice question-answering benchmark consisting of questions authored by domain experts holding PhDs in biology, physics, and chemistry. The questions are deliberately designed to be ``Google-proof''; they cannot be answered through simple web searches, requiring instead genuine domain expertise and multi-step scientific reasoning. The dataset comprises three nested subsets: GPQA Extended (546 questions), GPQA Main (448 questions), and GPQA Diamond (198 questions). We adopt the Diamond subset, which is the highest-quality partition: it contains only questions where both independent domain expert validators answered correctly, while skilled non-expert validators (holding PhDs in other scientific fields, with unrestricted internet access) failed. Human performance baselines on the Diamond subset are approximately 65\% for domain experts and 34\% for non-experts, compared to a random baseline of 25\%.

\par\smallskip
\noindent\textbf{SciBench-Atkins}~\citep{wang2024scibench}. SciBench is a benchmark designed to evaluate the ability of large language models to solve challenging college-level scientific problems requiring domain knowledge, multi-step reasoning, and advanced numerical computation. We use the \textit{Atkins} subset of SciBench, whose publicly released data files contain 105 problems drawn from \textit{Atkins' Physical Chemistry}. The subset focuses on physical chemistry and covers scientific concepts involving areas such as thermodynamics, equilibrium, molecular structure, and chemical reactions. Unlike multiple-choice science benchmarks, SciBench consists of open-ended, free-response problems that require models to retrieve and apply relevant scientific principles, select appropriate equations, and carry out potentially complex calculations to obtain the final answer. The dataset provides reference numerical answers and, where applicable, physical units, enabling systematic evaluation of models' quantitative scientific problem-solving capabilities. We evaluate on all 105 released problems without any filtering. Because each problem specifies its target unit separately from the problem statement, we state the required unit in the prompt so that the numerical answer is uniquely determined; without it a value that is physically correct but expressed on a different unit scale would be scored as an error.

\subsection{Baselines}
\label{app:baselines}

We compare \ourmodel against a diverse set of baselines covering direct multi-domain training, 
on-policy distillation, multi-teacher capability integration, and parameter-space model merging. 
Together, these baselines represent the main alternatives for incorporating knowledge from multiple 
domains: learning all domains jointly within a single model, integrating multiple teachers in policy 
space, or independently training domain experts and combining them afterward in parameter space. 
For the model-merging baselines, all domain experts are independently trained from the same 
initialization using single-domain OPD, and differ only in how their resulting task vectors are merged.

\begin{itemize}

\item \textbf{Vanilla.}
The base student model without any additional domain-specific post-training. 
This baseline measures the capabilities already present in the pretrained or instruction-tuned 
checkpoint and serves as the starting point for all subsequent student-side training.

\item \textbf{Mix-data SFT.}
A direct multi-domain supervised fine-tuning baseline in which the training examples from the 
source and target domains are pooled and used to fine-tune a single model. 
Unlike the independently trained experts used by model merging, the model is exposed to all 
participating domains simultaneously throughout training, allowing cross-domain knowledge to be 
integrated directly through shared parameter updates. 
This provides a standard offline-learning reference for evaluating whether simply increasing the 
diversity of supervised training data is sufficient to obtain positive cross-domain transfer.

\item \textbf{Mix-data GRPO}~\citep{grpo}.
A joint reinforcement-learning baseline in which a single policy is optimized on the mixture of 
source- and target-domain training problems using Group Relative Policy Optimization (GRPO). 
For each prompt, multiple responses are sampled from the current policy and assigned task-specific 
outcome rewards, from which GRPO constructs group-relative advantages without requiring a separate 
learned value function. 
The resulting policy-gradient objective therefore allows all participating domains to be optimized 
jointly under on-policy interaction, while supervision is provided primarily through sequence-level 
outcome signals rather than dense token-level guidance.

\item \textbf{Mix-data OPD}.
A joint on-policy distillation baseline that uses a single teacher and a single student across all 
participating domains. 
We first train a unified teacher with reinforcement learning on the mixture of source- and 
target-domain data. 
A single student is then trained on the same mixed-domain data using OPD: trajectories are sampled 
from the student policy, while the mixed-domain teacher supplies dense token-level supervision along 
the states visited by the student. 
This baseline isolates whether cross-domain integration can be achieved by exposing both the teacher 
and the student to the mixed data distribution, without introducing multiple domain-specific teachers 
or an explicit model-merging stage.

\item \textbf{Single-domain OPD}.
A target-only on-policy distillation baseline in which each domain is trained independently from the 
shared initialization using only its own training data and corresponding teacher. 
The student generates its own trajectories, and the domain teacher provides dense token-level 
supervision along those trajectories through the OPD objective. 
When evaluating a transfer direction, we use the model trained only on the target domain as the 
reference point. 
Accordingly, improvements over this baseline indicate positive cross-domain transfer, whereas lower 
performance indicates negative transfer.

\item \textbf{MOPD}~\citep{mopd}.
Multi-Teacher On-Policy Distillation integrates multiple domain-specialized teachers into a single 
student in policy space. 
The student is trained on multi-domain data and generates trajectories on-policy; each training 
example is routed to the teacher corresponding to its domain, which provides dense token-level 
supervision on the student's trajectory. 
In contrast to Mix-data OPD, which relies on one teacher jointly trained on the mixed-domain 
distribution, MOPD preserves separate domain-specialized teachers and combines their supervision 
during student training. 
It therefore serves as a strong distillation-based capability-integration baseline that does not rely 
on parameter-space model merging.

\item \textbf{Weight Average}~\citep{wa}.
A standard parameter-space merging baseline applied to independently distilled domain experts. 
Let $\theta_0$ denote the shared initialization and $\Delta_k = \theta_k - \theta_0$ the task vector 
of domain $k$. 
The merged model is obtained as $\theta_{\text{merged}} = \theta_0 + \sum_k \lambda_k \Delta_k$, 
where $\sum_k \lambda_k = 1$, so that every participating domain contributes its task vector through 
a merging coefficient. 
For two domains, equal weighting corresponds to 
$\theta_{\text{merged}} = \theta_0 + \frac{1}{2}\Delta_t + \frac{1}{2}\Delta_s$. 
This baseline represents the simplest form of model merging and provides the direct reference for 
measuring the effect of changing how the domain experts are trained while keeping the merging rule 
fixed.

\item \textbf{TIES-Merging}~\citep{ties}.
TIES-Merging is a task-vector merging method designed to reduce interference caused by redundant 
updates and sign conflicts across independently fine-tuned models. 
It consists of three steps: \emph{Trim}, which removes task-vector entries with small magnitudes; 
\emph{Elect Sign}, which determines a dominant update direction for each parameter coordinate 
across tasks; and \emph{Disjoint Merge}, which aggregates only the task-vector entries whose signs 
agree with the elected direction. 
By explicitly resolving conflicting parameter updates before aggregation, TIES provides a stronger 
alternative to direct weight averaging when independently trained domains disagree in parameter 
space.
Following the original work, we retain the top $20\%$ of entries by magnitude and use a merge
scaling of $\lambda = 1.0$.

\item \textbf{DARE+WA}~\citep{dare}.
DARE is a task-vector preprocessing technique that randomly drops a fraction of the parameter 
updates produced by fine-tuning and rescales the remaining updates to preserve their expected 
magnitude. 
Formally, DARE sparsifies each domain task vector before it is passed to the downstream merging 
operator. 
In our DARE+WA baseline, the processed task vectors are subsequently combined using the same 
weight-averaging rule described above. 
This baseline tests whether reducing redundant or unnecessary task-vector entries before merging is 
sufficient to mitigate cross-domain interference.
We sweep the drop rate over $p \in \{0.1, 0.2, \ldots, 0.9\}$ and report its best configuration.

\item \textbf{AdaMerging}~\citep{adamerging}.
AdaMerging replaces manually specified merging weights with coefficients learned from data. 
Starting from independently trained task vectors, it optimizes task-wise or layer-wise merging 
coefficients using an unsupervised objective, allowing different tasks and model components to 
contribute to the merged model with different strengths. 
Compared with fixed-weight averaging, AdaMerging therefore provides substantially more flexibility 
for reconciling heterogeneous domain experts in parameter space and serves as an adaptive 
model-merging baseline.
We use the layer-wise variant and initialize all learnable merging coefficients to $0.3$. Following
the original work, the coefficients are optimized via entropy minimization on unlabeled samples
using Adam with a learning rate of $1\times10^{-3}$, a batch size of $16$, and $500$ optimization
iterations.

\end{itemize}

\subsection{Implementation Details.}
\label{app:implementation}

All experiments are conducted using the VeRL framework~\citep{verl}. To ensure a fair comparison,
all methods share the same base training infrastructure, optimizer, and core hyperparameters unless
otherwise noted.

\begin{itemize}

    \item  \textbf{Agent Environment.}
Every method in this paper is trained and evaluated in the multi-turn agent setting
of~\Cref{subsec:opd}, in which the model interleaves reasoning with tool invocations and receives an
observation back before continuing. SandboxFusion serves as the code interpreter in every scenario,
so that the three domains differ in their data and their teacher but not in the environment they
interact with: a mathematical or scientific derivation is verified by executing code in the same
sandbox that runs a candidate program on its test cases. Sandbox observations are part of the
context but never of the loss, as \Cref{eq:opd} makes explicit, so a failed call influences training
only through the trajectory it leads the student into. The prompt templates that elicit this
behaviour are given in \Cref{app:prompt}. The one study that departs from this environment
is~\Cref{app:search}, which replaces the interpreter with a retrieval engine while keeping the
same treatment of observations.

    \item  \textbf{Compute Resources.}
All experiments are conducted on a single node with 8 NVIDIA H20 GPUs (96GB memory each). For RL and
distillation-based methods, training runs for 1 epoch and typically takes about 2 to 3 days for 1.7B
and 4B student models, while training with larger 14B teacher models requires approximately 5 days.
In contrast, supervised fine-tuning (SFT) is significantly more efficient, with 5 training epochs
typically completed within a few hours under the same hardware setup.
To ensure robustness and statistical reliability, all code-interpreter experiments are repeated
over 5 independent runs with different random seeds; the search-agent study of
\Cref{app:search} is a single-run generalization check and reports point estimates. These details
provide sufficient information about hardware configuration, execution time, and experimental
protocol for reproducibility.

    \item  \textbf{Baselines \& Teacher Models (RL \& Distillation).}
For all RL and distillation-based methods (Mix-data GRPO, Mix-data OPD, Single-domain OPD, MOPD)
including the teacher models, we adopt a unified training configuration to isolate the effect of each
algorithm. Specifically, we use the AdamW optimizer with a learning rate of 1e-6, a training batch
size of 64, and a mini-batch size of 16. The maximum prompt length is set to 2,560 tokens and the
maximum response length to 20,480 tokens. We sample 16 responses per prompt during training and 32
during validation. All methods are trained for at most 1 epoch (for the teacher models, we train for
at most 2 epochs). Rollout is performed asynchronously via vLLM with tensor parallelism of 4. The
teacher models of these experiments are Qwen3-14B further optimized with GRPO, and for
distillation-based methods the
teacher provides token-level supervision along the student's on-policy rollouts. Mix-data SFT is the
one baseline outside this configuration, as it involves no rollouts during training and therefore
requires reference responses, which none of our datasets provides in a form suitable for supervision.
We construct them from the same prompts used by every other method: each domain teacher samples
responses for the $4$k prompts of its own domain, only those accepted by that domain's verifier are
retained, and the resulting pairs are pooled across domains. Mix-data SFT then fine-tunes the base
model on this corpus for 5 epochs with a global batch size of 128, using AdamW with a learning
rate of 5e-5 and a maximum sequence length of 32,768 tokens with right truncation. It is also the
only method with an SFT stage, since every other method, \ourmodel included, starts from the released
base checkpoint $\theta_0$.

    \item  \textbf{Mixed-Domain Training and Supervision.}
The names of the baselines leave two questions open, namely which model is exposed to several domains
jointly and where its supervision comes from, and \Cref{tab:mixdata} answers both for every method.
Mix-data SFT and Mix-data GRPO query no teacher during optimization, training a single model
directly on the union of the participating domains' data; the offline targets Mix-data SFT is
fitted to are generated beforehand by the domain teachers, as described above. Mix-data OPD does use a teacher, and that teacher is itself the
Mix-data GRPO model, so the mixture is seen on both sides. MOPD mixes the student's data as well, but
keeps the single-domain teachers and routes each training example to the teacher of its own domain,
which is precisely what distinguishes it from Mix-data OPD. %
The four merging baselines involve no
mixed-domain training on either side: their task vectors are obtained exactly as \ourmodel obtains
its own, by running OPD separately per domain with that domain's teacher and data, and are then
combined by the respective merging operator. Weight Average uses the same weighted task arithmetic
as \ourmodel and therefore differs from it only in the objective under which each expert was
trained, whereas the remaining three additionally modify how the experts are combined. Our main
experiments instantiate this with one source
domain per target domain, which isolates the pairwise effect that \Cref{subsec:locating} analyzes,
while every method above, \ourmodel included, is defined for an arbitrary number of domains.

\begin{table}[t]
    \centering
    \caption{
    \textbf{Mixed-domain training and supervision across the compared methods.}
    A method is \emph{mix-data} on a given axis when the model in question is exposed to the
    participating domains jointly rather than to one domain at a time. \ourmodel and the four
    merging baselines share every column of this table, and \ourmodel and Weight Average also
    share the merging rule, so those two differ only in the objective under which each expert
    is trained.
    }
    \vspace{1mm}
    \label{tab:mixdata}

    \begingroup
    \setlength{\tabcolsep}{5pt}
    \renewcommand{\arraystretch}{1.12}
    \normalsize

    \begin{adjustbox}{max width=\textwidth}
    \begin{tabular}{lcccc}
        \toprule

        \multirow{2}{*}{{\footnotesize\textbf{Method}}}
        & \multicolumn{2}{c}{{\footnotesize\textbf{Mixed-domain data}}}
        & \multirow{2}{*}{{\footnotesize\textbf{Teacher}}}
        & \multirow{2}{*}{{\footnotesize\textbf{Merging}}} \\

        \cmidrule(lr){2-3}

        & {\footnotesize Student}
        & {\footnotesize Teacher}
        & & \\

        \midrule

        Mix-data SFT
        & \checkmark
        & --
        & none
        & -- \\

        Mix-data GRPO
        & \checkmark
        & --
        & none
        & -- \\

        Mix-data OPD
        & \checkmark
        & \checkmark
        & Mix-data GRPO
        & -- \\

        MOPD
        & \checkmark
        & \ding{55}
        & single-domain, routed per example
        & -- \\

        \midrule

        Single-domain OPD
        & \ding{55}
        & \ding{55}
        & single-domain
        & -- \\

        \midrule

        Weight Average
        & \ding{55}
        & \ding{55}
        & single-domain
        & \checkmark \\

        TIES-Merging
        & \ding{55}
        & \ding{55}
        & single-domain
        & \checkmark \\

        DARE+WA
        & \ding{55}
        & \ding{55}
        & single-domain
        & \checkmark \\

        AdaMerging
        & \ding{55}
        & \ding{55}
        & single-domain
        & \checkmark \\

        \midrule

        \textbf{\ourmodel}
        & \ding{55}
        & \ding{55}
        & single-domain
        & \checkmark \\

        \bottomrule
    \end{tabular}
    \end{adjustbox}

    \endgroup

    \vspace{-10pt}
\end{table}

    \item  \textbf{\ourmodel.}
\ourmodel adopts the same training configuration as the RL and distillation baselines described
above, is trained per domain with the same single-domain teacher and data, and is merged with the
same weighted task arithmetic as the Weight Average baseline, under equal coefficients that are
fixed rather than tuned or learned, so that two domains are combined as
$\theta_{\text{merged}} = \theta_0 + \frac{1}{2}\Delta_t + \frac{1}{2}\Delta_s$. The warm-up of \Cref{eq:magnitude}
runs for $N = 10$ steps, which is an absolute step count rather than a fraction of the training
budget because the subset of parameters carrying the update settles after a number of steps that
does not scale with how long a domain is trained. The reference level of \Cref{eq:reference} is
taken within each weight matrix at $p = 1\%$, the base perturbation radius is $\rho = 0.03$, and the
amplification cap is $\alpha = 5$, so that the per-coordinate scales of \Cref{eq:scale} satisfy
$s_k(i) \in [1, \alpha]$. This is a single global setting, used unchanged across all domains and
both model scales rather than tuned per domain, and \Cref{app:hparam} reports a sensitivity
analysis around it. One-dimensional parameters
such as the LayerNorm gains are left unscaled
at $s_k(i) = 1$, since they hold too few entries for a $1\%$ reference level to be meaningful. The
update magnitudes are computed once when the warm-up ends and kept fixed for the remainder of
training. Denominators are floored by a small positive constant throughout, both the reference
level of \Cref{eq:reference} and the norm of \Cref{eq:perturbation}, so that a matrix whose
reference level vanishes leaves its coordinates at $s_k(i) = 1$ rather than producing a division by
zero. Of the two, only $\alpha$ is specific to \ourmodel: $\rho$ is inherited from
sharpness-aware training and searched over the same grid for every sharpness-aware variant, while
$N$ and $p$ are held at the values stated here throughout.

\end{itemize}

\section{More Experimental Results}\label{app:more-results}

\subsection{Scaling the Number of Source Domains.}\label{app:scaling}

\begin{figure*}[h]
\centering
\includegraphics[width=1.0\linewidth]{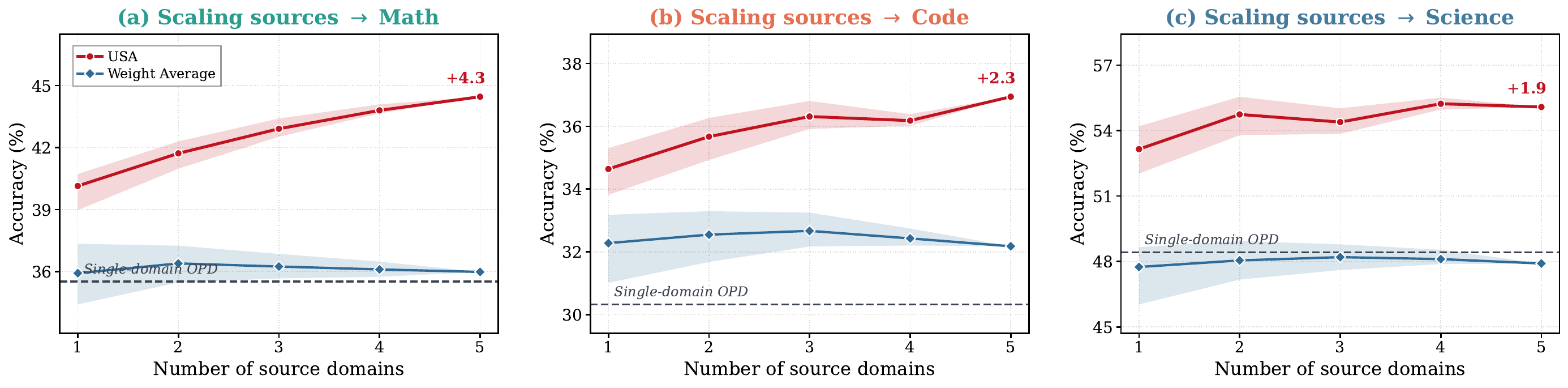}
\caption{\textbf{Accuracy against the number of source domains merged into a fixed target, on
the Qwen3-1.7B students.} One panel per target domain, with \ourmodel and Weight Average
merging the same experts and differing only in how those experts were trained. Each point
averages every combination of sources of that size and the band spans the weakest and the
strongest of them, so it is a range over the choice of sources rather than a seed interval.
The dashed line is the Single-domain OPD reference of~\Cref{tab:main} and the label gives the
change of \ourmodel from one source to five.}
\label{fig:scaling}
\vspace{-0.5 em}
\end{figure*}

\Cref{fig:analysis}(c) merges two sources into a target, which is as far as the three domains
of~\Cref{tab:main} allow. To go further we extend the pool with three additional domains, Data
Analysis, Finance and Engineering, drawn from the corpora described in~\Cref{app:datasets}. These
three serve only as sources and are never evaluated on, so a fixed target admits up to five of
them and the study covers three targets rather than six. Each point of~\Cref{fig:scaling} is the
mean over all combinations of sources of that size, which removes any dependence on a particular
choice and leaves a single combination once all five are merged. \ourmodel and Weight Average merge
by the same weighted task arithmetic here as everywhere else in the paper, so the distance between
the two curves is again attributable to how the experts were trained.

\begin{itemize}[leftmargin=*]

\item \textbf{Obs 10: the two methods differ in trend rather than only in level.} \ourmodel gains
over the whole range on every target, by $4.3$, $2.3$ and $1.9$ points from one source to five,
while Weight Average turns over after two sources on Math and after three on Code and Science and
then declines towards its starting point. Neither curve is monotone, since a point averages the
combinations available at that size and the pool is small, but the direction over the range is
unambiguous and holds on all three targets. What the scaling behaviour adds to~\Cref{fig:analysis}(c)
is therefore not a larger gain but a different shape: under plain merging the amount of
cross-domain knowledge a target can absorb is bounded and reached early, whereas suppressing the
interference during training leaves the merged model able to take in more of it than the pool
contains.

\end{itemize}

\subsection{Hyperparameter Sensitivity Analysis.}\label{app:hparam}

\begin{figure*}[h]
\centering
\includegraphics[width=1.0\linewidth]{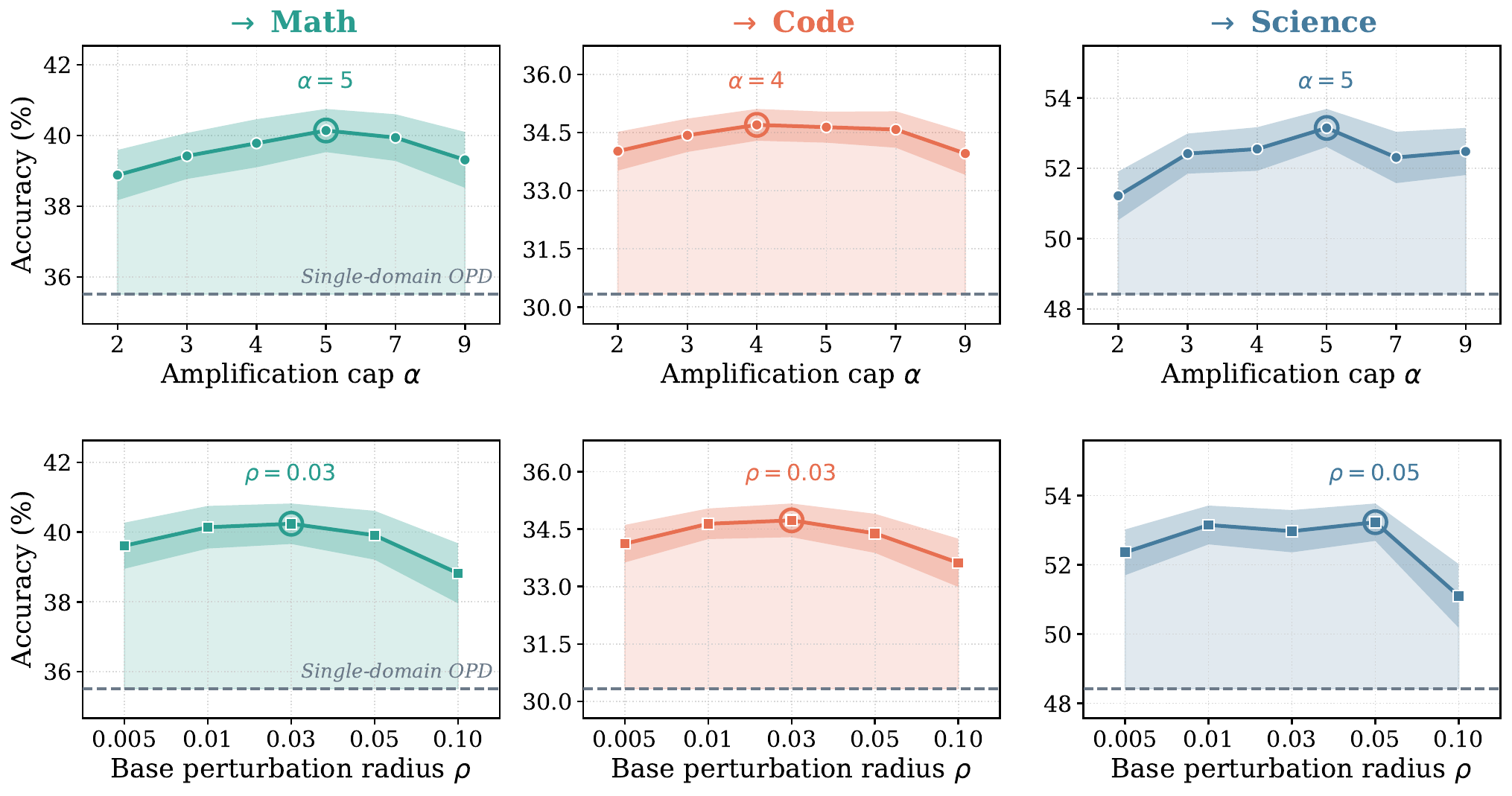}
\caption{\textbf{Sweeps over the two hyperparameters of \ourmodel on the Qwen3-1.7B students.}
The top row varies the amplification cap $\alpha$ of~\Cref{eq:scale} and the bottom row varies the
base radius $\rho$ of~\Cref{eq:objective}, each with the other held at its default. Each column
fixes a target domain and each point averages the two transfer directions of~\Cref{tab:main} that
share it, so either row covers the six directions once. The band is the standard deviation over
five runs, the dashed line is the Single-domain OPD reference of~\Cref{tab:main}, and the circled
marker is the peak of that curve.}
\label{fig:hparam}
\vspace{-0.5 em}
\end{figure*}

Because \Cref{eq:objective} constrains an update-salient coordinate over $\rho\,s_k(i)$ rather than
over $\rho$, and that semi-axis reaches $\rho\alpha$, one may ask whether \ourmodel merely trains
under a larger perturbation radius and whether an isotropic constraint of comparable size would do
as well. \Cref{fig:hparam} separates the two quantities that this reading conflates, namely the
overall size of the flatness budget and the way it is distributed across coordinates. The isotropic
constraint itself is the \emph{w/ Vanilla SAM} row of \Cref{tab:ablation}, which is $\alpha = 1$
with $\rho$ tuned over the grid of the bottom row, and it is the level against which the top row
has to be read.

The two rows behave differently. Varying $\alpha$ moves the average over the six directions by
$1.27$ points and every setting of the grid stays above the isotropic result on all three targets,
so what the objective is sensitive to is the profile of the budget across coordinates rather than
the presence of a constraint. Each curve rises to a peak and then turns down, which is the behaviour \Cref{eq:ratio}
leads one to expect: the displacement factor can only be reduced by redistributing the budget, and
past a certain amplification the curvature factor grows faster than the displacement factor falls,
which is why the construction caps the amplification instead of leaving it unbounded. Varying
$\rho$ leaves the average within $0.62$ points over the tenfold interval from $0.005$ to $0.05$ and
keeps \ourmodel above the single-domain reference at every setting, with the gain narrowing only at
the loosest radius of the grid, since a constraint that loose asks the network to
keep the loss low over a neighbourhood in which the quadratic picture underlying
\Cref{eq:lemma} no longer holds. This asymmetry is what \Cref{prop:bound} predicts, since its
right-hand side is invariant under $S \mapsto cS$ so that only the relative profile of the scales
can affect it, while $\rho$ enters \Cref{eq:lemma} through a multiplicative $\rho^2$ that sets how
hard the curvature is suppressed rather than along which directions.

The comparison an isotropic constraint of matched size would provide is already contained in
\Cref{tab:ablation}, whose \emph{w/ Vanilla SAM} row was tuned over the grid of the bottom row and
sits $1.75$ points below \ourmodel. A single pair, $\alpha = 5$ and $\rho = 0.03$, is used in every
other experiment of this paper, unchanged across the three domains, the six transfer directions and
both student scales, and we keep it rather than fitting a pair to each domain. What the two rows establish
is in any case not the merit of one setting over another but the insensitivity of the conclusion to
the choice, since every setting of either grid leaves \ourmodel above the reference that row is read
against and the least favourable one does so as well. The comparisons of \Cref{tab:main} therefore
do not rest on where inside these ranges the pair is placed. The warm-up length $N$ and the
reference fraction $p$ are held at the values of \Cref{subsec:perturbation} throughout.

\subsection{Compatibility with Downstream Merging Operators.}\label{app:merger}

\begin{figure*}[h]
\centering
\includegraphics[width=1.0\linewidth]{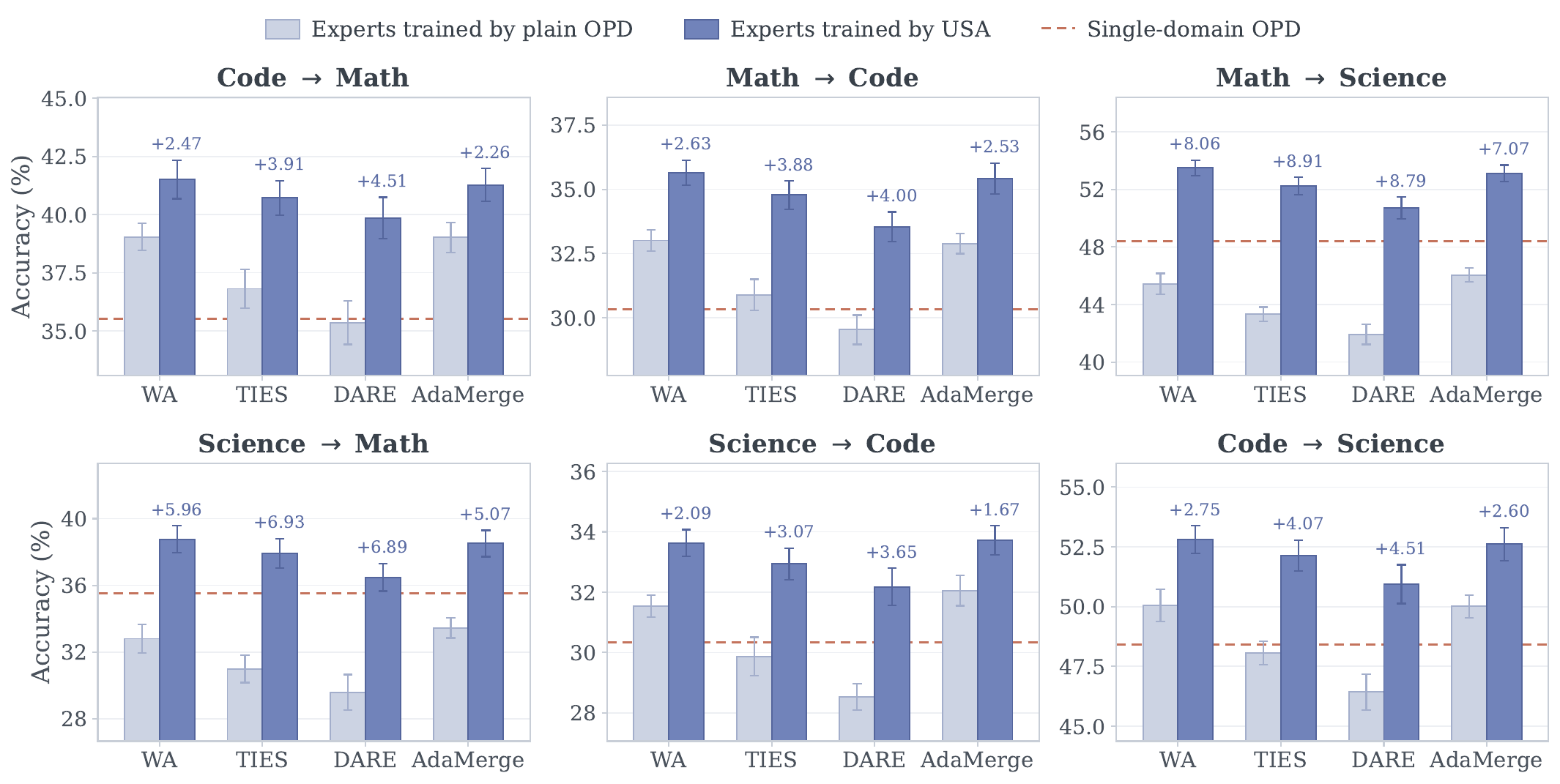}
\caption{\textbf{Swapping the downstream merging operator, on the Qwen3-1.7B students.}
Each panel is one transfer direction and each group of bars one operator, with the light bar
merging experts trained by plain OPD and the dark bar merging experts trained by \ourmodel.
The two panels of a column share a target domain and therefore also the dashed Single-domain
OPD reference of~\Cref{tab:main}. Labels give the gain of
\ourmodel over plain OPD under the same operator, and error bars are standard deviations over
five runs.}
\label{fig:merger}
\vspace{-0.5 em}
\end{figure*}

\ourmodel intervenes on how each domain expert is trained and leaves the merging step
untouched, so the gains reported in~\Cref{sec:experiment} could in principle be specific to the
operator those experiments use. To test this we keep the experts fixed and swap the operator for
three further choices beyond weight averaging, namely TIES~\citep{ties},
DARE~\citep{dare} applied before weight averaging, and
AdaMerging~\citep{adamerging}, and we run the swap twice, once on experts trained by
plain OPD and once on experts trained by \ourmodel, which separates what the operator
contributes from what the training contributes.

\begin{itemize}[leftmargin=*]

\item \textbf{Obs 11: the gain of \ourmodel survives every operator we try, whereas plain OPD
depends on the operator to stay competitive at all.} Merging experts trained by \ourmodel
improves on merging experts trained by plain OPD under all four operators and in all six
directions, and it clears the single-domain reference in every one of those cells. Plain OPD
does not: with DARE it falls below that reference in all six directions and with TIES in four
of them, so under plain OPD the merged model can be worse than not merging at all, and which
operator is used decides whether the cross-domain step is worth performing. The spread across
the four operators also narrows from $3.71$ to $2.04$ points on average, and the weakest
operator under \ourmodel still finishes above the strongest operator under plain OPD. Weight
averaging remains the best or joint-best of the four on almost every direction, which is why
the main tables use it, so the conclusion is not that the choice of operator ceases to matter
but that it stops deciding whether cross-domain merging helps or hurts.

\item \textbf{Obs 12: the operators lose ground for two reasons, neither of which is a defect
of the operators themselves.} They were designed to consolidate several tasks into a single
checkpoint while preserving all of them, and weight averaging in particular was introduced to
recover value from runs that would otherwise be discarded, so none of them is built to spend a
source domain in order to buy accuracy on a target. Their parameter-level operations also
presuppose the task vectors that supervised fine-tuning produces. DARE is the clearest
instance, since its random dropping and rescaling is justified by the extreme redundancy of
those deltas, and the updates that on-policy distillation produces are sparse and heavy-tailed
in the sense of~\Cref{subsec:locating}, so dropping at a rate calibrated for redundant deltas
removes coordinates that carry the update instead. This is consistent with DARE being the
weakest of the four under plain OPD and with it being the operator that \ourmodel helps most,
by $5.39$ points on average, since concentrating the flatness budget on the update-salient
coordinates is what makes those coordinates tolerant to the perturbation that the operator
applies to them.

\end{itemize}

\subsection{Generalization to Search Agents.}\label{app:search}

\begin{figure*}[h]
\centering
\includegraphics[width=1.0\linewidth]{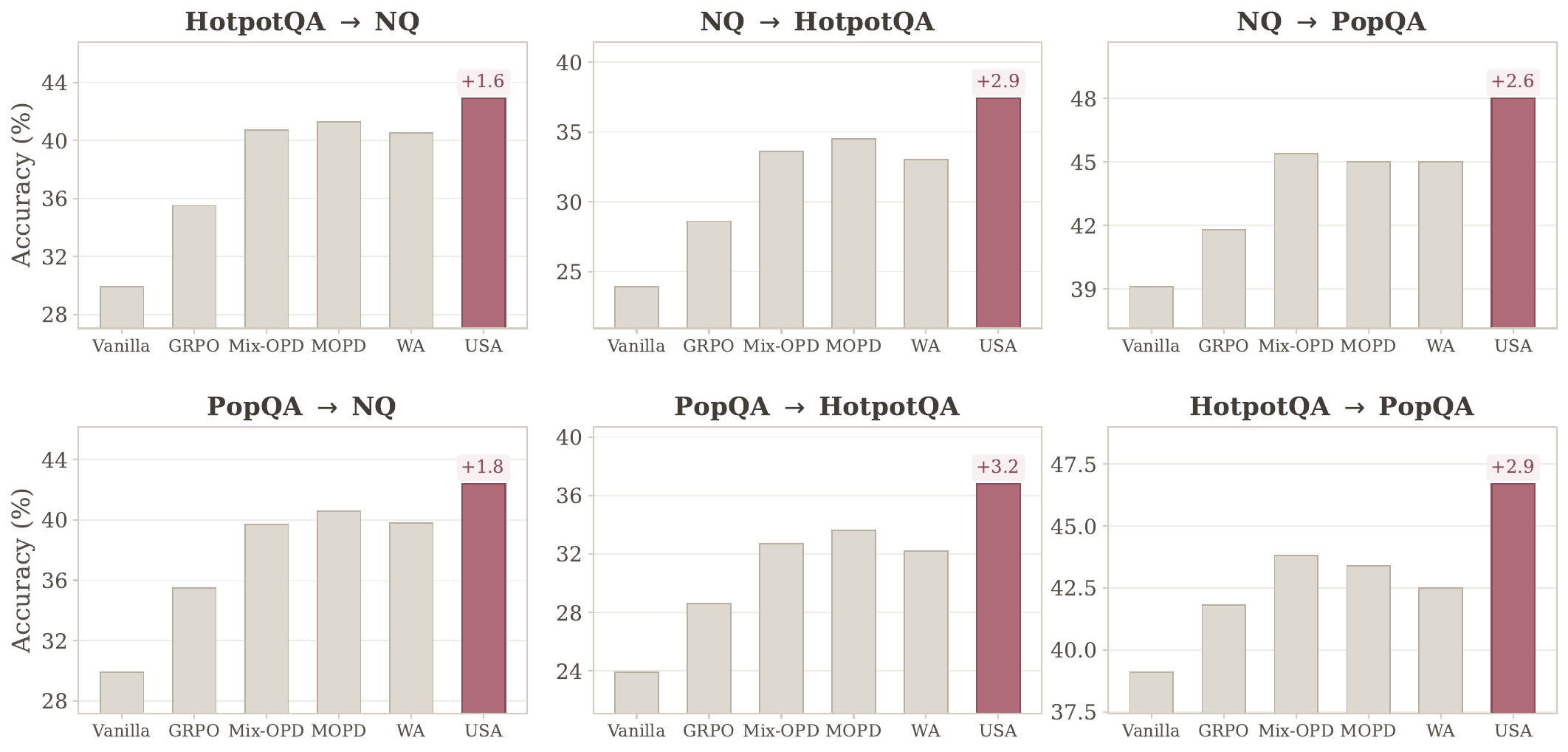}
\caption{\textbf{Transfer between search-agent domains on the Qwen3-1.7B students.}
Each panel is one transfer direction and each bar one method, with \ourmodel drawn in colour and
the five baselines in a neutral tone. Labels give the margin of \ourmodel over
the strongest baseline of that panel. Every configuration is run once, so no deviations are
shown.}
\label{fig:search}
\vspace{-0.5 em}
\end{figure*}

To examine whether the benefit of \ourmodel is specific to code-interpreter agents, we further
evaluate it in a retrieval-based search-agent environment. We follow the interaction protocol of
Search-R1~\citep{jin2025searchr1}, where the model alternates between reasoning and search actions,
and retrieved passages are appended to the trajectory as environment observations. We instantiate
three search-task domains with distinct retrieval and reasoning characteristics: Natural Questions
(NQ)~\citep{kwiatkowski2019natural} for general open-domain question answering,
HotpotQA~\citep{yang2018hotpotqa} for multi-hop question answering, and
PopQA~\citep{mallen2023trust} for long-tail factual question answering. We uniformly sample
$8$K training questions from each domain, yielding $24$K questions in total.

The search environment follows Search-R1~\citep{jin2025searchr1}. We use a fixed Wikipedia
corpus with an E5 dense retriever~\citep{wang2022e5}, return the top three passages for each
search action, and allow at most four search actions per trajectory. Retrieved passages are inserted
into the context but are excluded from the training loss, exactly as environment observations are
handled in~\Cref{eq:opd}. All domains share the same search environment and differ only in
their training distributions.

We use Qwen3-8B as the teacher and Qwen3-1.7B as the student. Each domain teacher is first
optimized with GRPO on its corresponding $8$K subset, after which the student is trained with OPD
or \ourmodel following the same pipeline as in the main experiments. We evaluate all six directed
source--target pairs formed by the three domains. Weight Average and \ourmodel merge independently
trained source and target experts with equal coefficients, $\lambda_s=\lambda_t=0.5$.
Mix-data OPD and MOPD follow the same definitions as in our main experiments, and Vanilla and
GRPO are reported alongside them. Importantly, we reuse the \ourmodel
hyperparameters selected in the code-interpreter experiments without search-specific tuning:
$N=10$, $p=1\%$, $\rho=0.03$, and $\alpha=5$. We report Accuracy (\%), where a prediction is
counted as correct when its normalized final answer exactly matches the gold answer. Each
configuration is run once, and we therefore report point estimates without standard deviations.

\begin{itemize}[leftmargin=*]

\item \textbf{Obs 13: the ordering of the methods carries over to an environment that shares
nothing with the main experiments but its interaction pattern.} \ourmodel is the strongest method
on every one of the six directions, ahead of the best baseline of each direction by $2.50$ points
on average and by at least $1.6$ points everywhere, and it holds that position against MOPD and
Mix-data OPD, which train on both domains jointly, as well as against weight averaging, which
merges the same two experts it does. The margin is of the same order as the one
of~\Cref{tab:main} even though the teacher, the student's training data, the tool and the metric
have all changed, so what the update-aware perturbation geometry exploits is the structure of the
updates that on-policy distillation produces rather than anything particular to executing code.

\item \textbf{Obs 14: the transfer requires no environment-specific tuning.} The amplification cap,
the reference fraction, the warm-up length and the base radius are all carried over unchanged from
the code-interpreter study of~\Cref{app:hparam}, even though the teacher, the student's data, the
tool and the evaluation metric all differ. This is what the construction of~\Cref{eq:scale}
predicts, since the reference level is a within-matrix quantile of the update magnitudes and
therefore adapts to whatever scale the updates happen to take in a new environment, leaving
$\alpha$ to set only the ratio between the largest and the smallest semi-axis. The quantities that
would need retuning if the mechanism were tied to the code-interpreter setting are exactly the ones
that did not.

\end{itemize}

\subsection{Per-Benchmark Results.}\label{app:detailed}

\Cref{tab:main} reports, for each transfer direction, the mean over the two benchmarks of the
target domain. \Cref{tab:detailed_1p7b,tab:detailed_4b} give the underlying per-benchmark scores
for the Qwen3-1.7B and Qwen3-4B students respectively, with the rows and the transfer directions
arranged as in the main table so that the two can be read side by side.

\begin{table*}[t]
    \centering
    \caption{
    \textbf{Per-benchmark results for the Qwen3-1.7B students.}
    Each pair of columns expands one cell of~\Cref{tab:main} into the two benchmarks of the
    target domain. The best results are in \textbf{bold}, and the second-best results are
    \underline{underlined}. We report average@32 over 5 runs.
    }
    \vspace{1mm}
    \label{tab:detailed_1p7b}

    \begingroup
    \setlength{\tabcolsep}{4.5pt}
    \renewcommand{\arraystretch}{1.08}
    \normalsize

    \begin{adjustbox}{max width=0.98\textwidth}
    \begin{tabular}{lcccccc}

        \toprule

        \multirow{2}{*}{\small\textbf{Method}}
        & \multicolumn{2}{c}{\small\textbf{Code $\rightarrow$ Math}}
        & \multicolumn{2}{c}{\small\textbf{Math $\rightarrow$ Code}}
        & \multicolumn{2}{c}{\small\textbf{Math $\rightarrow$ Sci.}} \\

        \cmidrule(lr){2-3}
        \cmidrule(lr){4-5}
        \cmidrule(lr){6-7}

        &
        {\footnotesize AIME 2025}
        & {\footnotesize HMMT Feb.\ 2026}
        & {\footnotesize LiveCodeBench-v6}
        & {\footnotesize NaturalCodeBench}
        & {\footnotesize GPQA-Diamond}
        & {\footnotesize SciBench-Atkins} \\

        \midrule

        \rowcolor[HTML]{FEE090}
        \multicolumn{7}{c}{
            \textbf{Teacher Model from the Qwen3-14B Series}
        } \\

        GRPO
        & 68.38\phantom{.}\scalebox{0.72}{$\pm$1.42}
        & 48.91\phantom{.}\scalebox{0.72}{$\pm$0.87}
        & 69.24\phantom{.}\scalebox{0.72}{$\pm$0.93}
        & 54.57\phantom{.}\scalebox{0.72}{$\pm$0.71}
        & 60.53\phantom{.}\scalebox{0.72}{$\pm$1.06}
        & 81.47\phantom{.}\scalebox{0.72}{$\pm$0.84} \\

        \midrule

        \rowcolor[HTML]{E0F3F8}
        \multicolumn{7}{c}{
            \textbf{Student Models from the Qwen3-1.7B Series}
        } \\

        Vanilla
        & 8.96\phantom{.}\scalebox{0.72}{$\pm$0.94}
        & 21.21\phantom{.}\scalebox{0.72}{$\pm$1.07}
        & 22.73\phantom{.}\scalebox{0.72}{$\pm$0.74}
        & 15.33\phantom{.}\scalebox{0.72}{$\pm$0.55}
        & 26.80\phantom{.}\scalebox{0.72}{$\pm$0.69}
        & 46.01\phantom{.}\scalebox{0.72}{$\pm$0.88} \\

        Mix-data SFT
        & 27.41\phantom{.}\scalebox{0.72}{$\pm$0.88}
        & 24.58\phantom{.}\scalebox{0.72}{$\pm$1.69}
        & 25.91\phantom{.}\scalebox{0.72}{$\pm$0.56}
        & 18.72\phantom{.}\scalebox{0.72}{$\pm$0.74}
        & 29.63\phantom{.}\scalebox{0.72}{$\pm$0.61}
        & 53.16\phantom{.}\scalebox{0.72}{$\pm$1.15} \\

        Mix-data GRPO
        & 35.88\phantom{.}\scalebox{0.72}{$\pm$1.41}
        & 29.22\phantom{.}\scalebox{0.72}{$\pm$0.79}
        & 31.58\phantom{.}\scalebox{0.72}{$\pm$0.91}
        & 22.06\phantom{.}\scalebox{0.72}{$\pm$0.48}
        & 31.48\phantom{.}\scalebox{0.72}{$\pm$0.84}
        & 58.72\phantom{.}\scalebox{0.72}{$\pm$0.67} \\

        Mix-data OPD
        & 42.21\phantom{.}\scalebox{0.72}{$\pm$1.06}
        & 33.61\phantom{.}\scalebox{0.72}{$\pm$1.32}
        & 37.86\phantom{.}\scalebox{0.72}{$\pm$0.68}
        & 26.37\phantom{.}\scalebox{0.72}{$\pm$0.59}
        & 33.07\phantom{.}\scalebox{0.72}{$\pm$0.52}
        & 60.84\phantom{.}\scalebox{0.72}{$\pm$1.09} \\

        Single-domain OPD
        & 39.58\phantom{.}\scalebox{0.72}{$\pm$1.21}
        & 31.44\phantom{.}\scalebox{0.72}{$\pm$0.96}
        & 35.73\phantom{.}\scalebox{0.72}{$\pm$0.87}
        & 24.92\phantom{.}\scalebox{0.72}{$\pm$0.71}
        & \underline{34.48}\phantom{.}\scalebox{0.72}{$\pm$0.76}
        & 62.36\phantom{.}\scalebox{0.72}{$\pm$1.03} \\

        MOPD
        & 43.84\phantom{.}\scalebox{0.72}{$\pm$0.82}
        & 34.19\phantom{.}\scalebox{0.72}{$\pm$1.18}
        & \underline{39.18}\phantom{.}\scalebox{0.72}{$\pm$0.47}
        & 27.05\phantom{.}\scalebox{0.72}{$\pm$0.83}
        & 34.12\phantom{.}\scalebox{0.72}{$\pm$0.66}
        & \underline{63.58}\phantom{.}\scalebox{0.72}{$\pm$0.78} \\

        \cmidrule(lr){1-7}

        Weight Average
        & \underline{44.12}\phantom{.}\scalebox{0.72}{$\pm$0.44}
        & 33.96\phantom{.}\scalebox{0.72}{$\pm$1.07}
        & 38.74\phantom{.}\scalebox{0.72}{$\pm$0.75}
        & \underline{27.28}\phantom{.}\scalebox{0.72}{$\pm$0.39}
        & 31.67\phantom{.}\scalebox{0.72}{$\pm$0.91}
        & 59.21\phantom{.}\scalebox{0.72}{$\pm$1.12} \\

        Ties-Merging
        & 41.47\phantom{.}\scalebox{0.72}{$\pm$1.52}
        & 32.14\phantom{.}\scalebox{0.72}{$\pm$0.73}
        & 36.44\phantom{.}\scalebox{0.72}{$\pm$1.03}
        & 25.36\phantom{.}\scalebox{0.72}{$\pm$0.62}
        & 29.82\phantom{.}\scalebox{0.72}{$\pm$0.43}
        & 56.84\phantom{.}\scalebox{0.72}{$\pm$0.89} \\

        DARE+WA
        & 39.82\phantom{.}\scalebox{0.72}{$\pm$1.17}
        & 30.88\phantom{.}\scalebox{0.72}{$\pm$1.45}
        & 34.97\phantom{.}\scalebox{0.72}{$\pm$0.84}
        & 24.11\phantom{.}\scalebox{0.72}{$\pm$0.76}
        & 28.54\phantom{.}\scalebox{0.72}{$\pm$0.58}
        & 55.31\phantom{.}\scalebox{0.72}{$\pm$1.28} \\

        AdaMerging
        & 43.67\phantom{.}\scalebox{0.72}{$\pm$0.97}
        & \underline{34.37}\phantom{.}\scalebox{0.72}{$\pm$0.88}
        & 39.06\phantom{.}\scalebox{0.72}{$\pm$0.61}
        & 26.71\phantom{.}\scalebox{0.72}{$\pm$0.50}
        & 32.35\phantom{.}\scalebox{0.72}{$\pm$0.72}
        & 59.74\phantom{.}\scalebox{0.72}{$\pm$0.65} \\

        \cmidrule(lr){1-7}

        \textbf{\ourmodel}
        & \textbf{46.90}\phantom{.}\scalebox{0.72}{$\pm$1.29}
        & \textbf{36.12}\phantom{.}\scalebox{0.72}{$\pm$1.04}
        & \textbf{42.31}\phantom{.}\scalebox{0.72}{$\pm$0.80}
        & \textbf{28.97}\phantom{.}\scalebox{0.72}{$\pm$0.57}
        & \textbf{37.82}\phantom{.}\scalebox{0.72}{$\pm$0.47}
        & \textbf{69.18}\phantom{.}\scalebox{0.72}{$\pm$0.94} \\

        \midrule
        \midrule

        \multirow{2}{*}{\small\textbf{Method}}
        & \multicolumn{2}{c}{\small\textbf{Sci. $\rightarrow$ Math}}
        & \multicolumn{2}{c}{\small\textbf{Sci. $\rightarrow$ Code}}
        & \multicolumn{2}{c}{\small\textbf{Code $\rightarrow$ Sci.}} \\

        \cmidrule(lr){2-3}
        \cmidrule(lr){4-5}
        \cmidrule(lr){6-7}

        &
        {\footnotesize AIME 2025}
        & {\footnotesize HMMT Feb.\ 2026}
        & {\footnotesize LiveCodeBench-v6}
        & {\footnotesize NaturalCodeBench}
        & {\footnotesize GPQA-Diamond}
        & {\footnotesize SciBench-Atkins} \\

        \midrule

        \rowcolor[HTML]{FEE090}
        \multicolumn{7}{c}{
            \textbf{Teacher Model from the Qwen3-14B Series}
        } \\

        GRPO
        & 68.38\phantom{.}\scalebox{0.72}{$\pm$1.42}
        & 48.91\phantom{.}\scalebox{0.72}{$\pm$0.87}
        & 69.24\phantom{.}\scalebox{0.72}{$\pm$0.93}
        & 54.57\phantom{.}\scalebox{0.72}{$\pm$0.71}
        & 60.53\phantom{.}\scalebox{0.72}{$\pm$1.06}
        & 81.47\phantom{.}\scalebox{0.72}{$\pm$0.84} \\

        \midrule

        \rowcolor[HTML]{E0F3F8}
        \multicolumn{7}{c}{
            \textbf{Student Models from the Qwen3-1.7B Series}
        } \\

        Vanilla
        & 8.96\phantom{.}\scalebox{0.72}{$\pm$0.94}
        & 21.21\phantom{.}\scalebox{0.72}{$\pm$1.07}
        & 22.73\phantom{.}\scalebox{0.72}{$\pm$0.74}
        & 15.33\phantom{.}\scalebox{0.72}{$\pm$0.55}
        & 26.80\phantom{.}\scalebox{0.72}{$\pm$0.69}
        & 46.01\phantom{.}\scalebox{0.72}{$\pm$0.88} \\

        Mix-data SFT
        & 26.23\phantom{.}\scalebox{0.72}{$\pm$1.46}
        & 23.91\phantom{.}\scalebox{0.72}{$\pm$0.81}
        & 25.36\phantom{.}\scalebox{0.72}{$\pm$0.69}
        & 19.03\phantom{.}\scalebox{0.72}{$\pm$0.43}
        & 30.58\phantom{.}\scalebox{0.72}{$\pm$0.57}
        & 54.39\phantom{.}\scalebox{0.72}{$\pm$1.14} \\

        Mix-data GRPO
        & 32.71\phantom{.}\scalebox{0.72}{$\pm$0.93}
        & 27.84\phantom{.}\scalebox{0.72}{$\pm$1.38}
        & 31.21\phantom{.}\scalebox{0.72}{$\pm$0.52}
        & 21.54\phantom{.}\scalebox{0.72}{$\pm$0.78}
        & 32.82\phantom{.}\scalebox{0.72}{$\pm$0.88}
        & 59.46\phantom{.}\scalebox{0.72}{$\pm$0.70} \\

        Mix-data OPD
        & 37.29\phantom{.}\scalebox{0.72}{$\pm$1.57}
        & 30.18\phantom{.}\scalebox{0.72}{$\pm$0.74}
        & 36.65\phantom{.}\scalebox{0.72}{$\pm$0.95}
        & 25.69\phantom{.}\scalebox{0.72}{$\pm$0.56}
        & 35.39\phantom{.}\scalebox{0.72}{$\pm$0.49}
        & 63.02\phantom{.}\scalebox{0.72}{$\pm$1.06} \\

        Single-domain OPD
        & 39.58\phantom{.}\scalebox{0.72}{$\pm$1.21}
        & \underline{31.44}\phantom{.}\scalebox{0.72}{$\pm$0.96}
        & 35.73\phantom{.}\scalebox{0.72}{$\pm$0.87}
        & 24.92\phantom{.}\scalebox{0.72}{$\pm$0.71}
        & 34.48\phantom{.}\scalebox{0.72}{$\pm$0.76}
        & 62.36\phantom{.}\scalebox{0.72}{$\pm$1.03} \\

        MOPD
        & \underline{40.31}\phantom{.}\scalebox{0.72}{$\pm$1.03}
        & 30.96\phantom{.}\scalebox{0.72}{$\pm$1.21}
        & 37.83\phantom{.}\scalebox{0.72}{$\pm$0.64}
        & \underline{26.31}\phantom{.}\scalebox{0.72}{$\pm$0.84}
        & 36.14\phantom{.}\scalebox{0.72}{$\pm$0.95}
        & \underline{64.27}\phantom{.}\scalebox{0.72}{$\pm$0.62} \\

        \cmidrule(lr){1-7}

        Weight Average
        & 36.42\phantom{.}\scalebox{0.72}{$\pm$0.86}
        & 29.18\phantom{.}\scalebox{0.72}{$\pm$1.49}
        & 37.19\phantom{.}\scalebox{0.72}{$\pm$0.58}
        & 25.88\phantom{.}\scalebox{0.72}{$\pm$0.45}
        & \underline{36.39}\phantom{.}\scalebox{0.72}{$\pm$0.68}
        & 63.71\phantom{.}\scalebox{0.72}{$\pm$1.18} \\

        Ties-Merging
        & 34.31\phantom{.}\scalebox{0.72}{$\pm$1.35}
        & 27.66\phantom{.}\scalebox{0.72}{$\pm$0.92}
        & 35.26\phantom{.}\scalebox{0.72}{$\pm$1.08}
        & 24.47\phantom{.}\scalebox{0.72}{$\pm$0.67}
        & 34.62\phantom{.}\scalebox{0.72}{$\pm$0.55}
        & 61.49\phantom{.}\scalebox{0.72}{$\pm$0.81} \\

        DARE+WA
        & 32.88\phantom{.}\scalebox{0.72}{$\pm$1.82}
        & 26.29\phantom{.}\scalebox{0.72}{$\pm$1.10}
        & 33.84\phantom{.}\scalebox{0.72}{$\pm$0.77}
        & 23.22\phantom{.}\scalebox{0.72}{$\pm$0.37}
        & 32.97\phantom{.}\scalebox{0.72}{$\pm$0.83}
        & 59.88\phantom{.}\scalebox{0.72}{$\pm$1.26} \\

        AdaMerging
        & 37.16\phantom{.}\scalebox{0.72}{$\pm$1.13}
        & 29.72\phantom{.}\scalebox{0.72}{$\pm$0.43}
        & \underline{38.02}\phantom{.}\scalebox{0.72}{$\pm$0.46}
        & 26.08\phantom{.}\scalebox{0.72}{$\pm$0.89}
        & 35.96\phantom{.}\scalebox{0.72}{$\pm$0.60}
        & 64.06\phantom{.}\scalebox{0.72}{$\pm$0.75} \\

        \cmidrule(lr){1-7}

        \textbf{\ourmodel}
        & \textbf{43.25}\phantom{.}\scalebox{0.72}{$\pm$1.00}
        & \textbf{34.27}\phantom{.}\scalebox{0.72}{$\pm$1.27}
        & \textbf{39.65}\phantom{.}\scalebox{0.72}{$\pm$0.72}
        & \textbf{27.60}\phantom{.}\scalebox{0.72}{$\pm$0.51}
        & \textbf{38.43}\phantom{.}\scalebox{0.72}{$\pm$0.74}
        & \textbf{67.16}\phantom{.}\scalebox{0.72}{$\pm$0.90} \\

        \bottomrule

    \end{tabular}
    \end{adjustbox}

    \endgroup

    \vspace{-10pt}
\end{table*}

\begin{table*}[t]
    \centering
    \caption{
    \textbf{Per-benchmark results for the Qwen3-4B students.}
    Each pair of columns expands one cell of~\Cref{tab:main} into the two benchmarks of the
    target domain. The best results are in \textbf{bold}, and the second-best results are
    \underline{underlined}. We report average@32 over 5 runs.
    }
    \vspace{1mm}
    \label{tab:detailed_4b}

    \begingroup
    \setlength{\tabcolsep}{4.5pt}
    \renewcommand{\arraystretch}{1.08}
    \normalsize

    \begin{adjustbox}{max width=0.98\textwidth}
    \begin{tabular}{lcccccc}

        \toprule

        \multirow{2}{*}{\small\textbf{Method}}
        & \multicolumn{2}{c}{\small\textbf{Code $\rightarrow$ Math}}
        & \multicolumn{2}{c}{\small\textbf{Math $\rightarrow$ Code}}
        & \multicolumn{2}{c}{\small\textbf{Math $\rightarrow$ Sci.}} \\

        \cmidrule(lr){2-3}
        \cmidrule(lr){4-5}
        \cmidrule(lr){6-7}

        &
        {\footnotesize AIME 2025}
        & {\footnotesize HMMT Feb.\ 2026}
        & {\footnotesize LiveCodeBench-v6}
        & {\footnotesize NaturalCodeBench}
        & {\footnotesize GPQA-Diamond}
        & {\footnotesize SciBench-Atkins} \\

        \midrule

        \rowcolor[HTML]{FEE090}
        \multicolumn{7}{c}{
            \textbf{Teacher Model from the Qwen3-14B Series}
        } \\

        GRPO
        & 68.38\phantom{.}\scalebox{0.72}{$\pm$1.42}
        & 48.91\phantom{.}\scalebox{0.72}{$\pm$0.87}
        & 69.24\phantom{.}\scalebox{0.72}{$\pm$0.93}
        & 54.57\phantom{.}\scalebox{0.72}{$\pm$0.71}
        & 60.53\phantom{.}\scalebox{0.72}{$\pm$1.06}
        & 81.47\phantom{.}\scalebox{0.72}{$\pm$0.84} \\

        \midrule

        \rowcolor[HTML]{E0F3F8}
        \multicolumn{7}{c}{
            \textbf{Student Models from the Qwen3-4B Series}
        } \\

        Vanilla
        & 46.73\phantom{.}\scalebox{0.72}{$\pm$1.12}
        & 26.84\phantom{.}\scalebox{0.72}{$\pm$1.16}
        & 45.61\phantom{.}\scalebox{0.72}{$\pm$0.72}
        & 33.47\phantom{.}\scalebox{0.72}{$\pm$0.68}
        & 44.38\phantom{.}\scalebox{0.72}{$\pm$0.87}
        & 58.62\phantom{.}\scalebox{0.72}{$\pm$0.84} \\

        Mix-data SFT
        & 52.34\phantom{.}\scalebox{0.72}{$\pm$1.08}
        & 31.08\phantom{.}\scalebox{0.72}{$\pm$1.37}
        & 53.21\phantom{.}\scalebox{0.72}{$\pm$0.81}
        & 37.08\phantom{.}\scalebox{0.72}{$\pm$0.54}
        & 47.62\phantom{.}\scalebox{0.72}{$\pm$0.63}
        & 62.51\phantom{.}\scalebox{0.72}{$\pm$1.17} \\

        Mix-data GRPO
        & 58.76\phantom{.}\scalebox{0.72}{$\pm$1.46}
        & 38.10\phantom{.}\scalebox{0.72}{$\pm$0.86}
        & 60.31\phantom{.}\scalebox{0.72}{$\pm$0.58}
        & 42.18\phantom{.}\scalebox{0.72}{$\pm$0.73}
        & 53.21\phantom{.}\scalebox{0.72}{$\pm$0.89}
        & 68.44\phantom{.}\scalebox{0.72}{$\pm$0.69} \\

        Mix-data OPD
        & 59.42\phantom{.}\scalebox{0.72}{$\pm$1.22}
        & 38.08\phantom{.}\scalebox{0.72}{$\pm$1.49}
        & 61.04\phantom{.}\scalebox{0.72}{$\pm$0.72}
        & 42.93\phantom{.}\scalebox{0.72}{$\pm$0.47}
        & 52.84\phantom{.}\scalebox{0.72}{$\pm$0.55}
        & 68.91\phantom{.}\scalebox{0.72}{$\pm$1.06} \\

        Single-domain OPD
        & 58.71\phantom{.}\scalebox{0.72}{$\pm$1.31}
        & 37.56\phantom{.}\scalebox{0.72}{$\pm$1.09}
        & 60.22\phantom{.}\scalebox{0.72}{$\pm$0.89}
        & 42.31\phantom{.}\scalebox{0.72}{$\pm$0.86}
        & \underline{54.16}\phantom{.}\scalebox{0.72}{$\pm$0.98}
        & \underline{69.26}\phantom{.}\scalebox{0.72}{$\pm$1.03} \\

        MOPD
        & \underline{60.55}\phantom{.}\scalebox{0.72}{$\pm$0.91}
        & \underline{38.61}\phantom{.}\scalebox{0.72}{$\pm$1.13}
        & \underline{61.74}\phantom{.}\scalebox{0.72}{$\pm$0.94}
        & 43.62\phantom{.}\scalebox{0.72}{$\pm$0.66}
        & 53.72\phantom{.}\scalebox{0.72}{$\pm$0.74}
        & 68.83\phantom{.}\scalebox{0.72}{$\pm$0.83} \\

        \cmidrule(lr){1-7}

        Weight Average
        & 60.21\phantom{.}\scalebox{0.72}{$\pm$1.57}
        & 38.42\phantom{.}\scalebox{0.72}{$\pm$0.98}
        & 61.58\phantom{.}\scalebox{0.72}{$\pm$0.63}
        & 43.47\phantom{.}\scalebox{0.72}{$\pm$0.81}
        & 51.48\phantom{.}\scalebox{0.72}{$\pm$0.46}
        & 66.55\phantom{.}\scalebox{0.72}{$\pm$1.11} \\

        Ties-Merging
        & 58.46\phantom{.}\scalebox{0.72}{$\pm$1.19}
        & 37.06\phantom{.}\scalebox{0.72}{$\pm$1.42}
        & 59.45\phantom{.}\scalebox{0.72}{$\pm$0.85}
        & 42.87\phantom{.}\scalebox{0.72}{$\pm$0.59}
        & 49.87\phantom{.}\scalebox{0.72}{$\pm$0.97}
        & 64.28\phantom{.}\scalebox{0.72}{$\pm$0.72} \\

        DARE+WA
        & 57.11\phantom{.}\scalebox{0.72}{$\pm$1.68}
        & 35.88\phantom{.}\scalebox{0.72}{$\pm$0.79}
        & 58.22\phantom{.}\scalebox{0.72}{$\pm$1.03}
        & 40.63\phantom{.}\scalebox{0.72}{$\pm$0.70}
        & 50.14\phantom{.}\scalebox{0.72}{$\pm$0.61}
        & 62.91\phantom{.}\scalebox{0.72}{$\pm$1.23} \\

        AdaMerging
        & 60.48\phantom{.}\scalebox{0.72}{$\pm$1.01}
        & 38.13\phantom{.}\scalebox{0.72}{$\pm$0.92}
        & 61.33\phantom{.}\scalebox{0.72}{$\pm$0.49}
        & \underline{43.76}\phantom{.}\scalebox{0.72}{$\pm$0.88}
        & 51.93\phantom{.}\scalebox{0.72}{$\pm$0.82}
        & 67.02\phantom{.}\scalebox{0.72}{$\pm$0.65} \\

        \cmidrule(lr){1-7}

        \textbf{\ourmodel}
        & \textbf{63.93}\phantom{.}\scalebox{0.72}{$\pm$1.34}
        & \textbf{40.39}\phantom{.}\scalebox{0.72}{$\pm$1.25}
        & \textbf{66.25}\phantom{.}\scalebox{0.72}{$\pm$0.76}
        & \textbf{46.25}\phantom{.}\scalebox{0.72}{$\pm$0.52}
        & \textbf{58.98}\phantom{.}\scalebox{0.72}{$\pm$0.50}
        & \textbf{75.10}\phantom{.}\scalebox{0.72}{$\pm$0.95} \\

        \midrule
        \midrule

        \multirow{2}{*}{\small\textbf{Method}}
        & \multicolumn{2}{c}{\small\textbf{Sci. $\rightarrow$ Math}}
        & \multicolumn{2}{c}{\small\textbf{Sci. $\rightarrow$ Code}}
        & \multicolumn{2}{c}{\small\textbf{Code $\rightarrow$ Sci.}} \\

        \cmidrule(lr){2-3}
        \cmidrule(lr){4-5}
        \cmidrule(lr){6-7}

        &
        {\footnotesize AIME 2025}
        & {\footnotesize HMMT Feb.\ 2026}
        & {\footnotesize LiveCodeBench-v6}
        & {\footnotesize NaturalCodeBench}
        & {\footnotesize GPQA-Diamond}
        & {\footnotesize SciBench-Atkins} \\

        \midrule

        \rowcolor[HTML]{FEE090}
        \multicolumn{7}{c}{
            \textbf{Teacher Model from the Qwen3-14B Series}
        } \\

        GRPO
        & 68.38\phantom{.}\scalebox{0.72}{$\pm$1.42}
        & 48.91\phantom{.}\scalebox{0.72}{$\pm$0.87}
        & 69.24\phantom{.}\scalebox{0.72}{$\pm$0.93}
        & 54.57\phantom{.}\scalebox{0.72}{$\pm$0.71}
        & 60.53\phantom{.}\scalebox{0.72}{$\pm$1.06}
        & 81.47\phantom{.}\scalebox{0.72}{$\pm$0.84} \\

        \midrule

        \rowcolor[HTML]{E0F3F8}
        \multicolumn{7}{c}{
            \textbf{Student Models from the Qwen3-4B Series}
        } \\

        Vanilla
        & 46.73\phantom{.}\scalebox{0.72}{$\pm$1.12}
        & 26.84\phantom{.}\scalebox{0.72}{$\pm$1.16}
        & 45.61\phantom{.}\scalebox{0.72}{$\pm$0.72}
        & 33.47\phantom{.}\scalebox{0.72}{$\pm$0.68}
        & 44.38\phantom{.}\scalebox{0.72}{$\pm$0.87}
        & 58.62\phantom{.}\scalebox{0.72}{$\pm$0.84} \\

        Mix-data SFT
        & 50.81\phantom{.}\scalebox{0.72}{$\pm$1.53}
        & 30.46\phantom{.}\scalebox{0.72}{$\pm$0.82}
        & 52.94\phantom{.}\scalebox{0.72}{$\pm$0.62}
        & 36.71\phantom{.}\scalebox{0.72}{$\pm$0.76}
        & 48.22\phantom{.}\scalebox{0.72}{$\pm$0.78}
        & 63.08\phantom{.}\scalebox{0.72}{$\pm$0.84} \\

        Mix-data GRPO
        & 56.98\phantom{.}\scalebox{0.72}{$\pm$1.02}
        & 36.89\phantom{.}\scalebox{0.72}{$\pm$1.45}
        & 60.15\phantom{.}\scalebox{0.72}{$\pm$0.91}
        & 41.86\phantom{.}\scalebox{0.72}{$\pm$0.49}
        & 55.15\phantom{.}\scalebox{0.72}{$\pm$0.57}
        & 69.44\phantom{.}\scalebox{0.72}{$\pm$1.12} \\

        Mix-data OPD
        & 57.23\phantom{.}\scalebox{0.72}{$\pm$1.61}
        & 36.72\phantom{.}\scalebox{0.72}{$\pm$1.04}
        & 60.08\phantom{.}\scalebox{0.72}{$\pm$0.55}
        & 42.59\phantom{.}\scalebox{0.72}{$\pm$0.85}
        & 54.62\phantom{.}\scalebox{0.72}{$\pm$0.77}
        & 69.73\phantom{.}\scalebox{0.72}{$\pm$0.68} \\

        Single-domain OPD
        & 58.71\phantom{.}\scalebox{0.72}{$\pm$1.31}
        & 37.56\phantom{.}\scalebox{0.72}{$\pm$1.09}
        & 60.22\phantom{.}\scalebox{0.72}{$\pm$0.89}
        & 42.31\phantom{.}\scalebox{0.72}{$\pm$0.86}
        & 54.16\phantom{.}\scalebox{0.72}{$\pm$0.98}
        & 69.26\phantom{.}\scalebox{0.72}{$\pm$1.03} \\

        MOPD
        & \underline{59.33}\phantom{.}\scalebox{0.72}{$\pm$1.16}
        & \underline{37.92}\phantom{.}\scalebox{0.72}{$\pm$0.75}
        & 61.26\phantom{.}\scalebox{0.72}{$\pm$0.80}
        & \underline{43.18}\phantom{.}\scalebox{0.72}{$\pm$0.51}
        & 55.08\phantom{.}\scalebox{0.72}{$\pm$0.94}
        & 70.31\phantom{.}\scalebox{0.72}{$\pm$1.06} \\

        \cmidrule(lr){1-7}

        Weight Average
        & 56.27\phantom{.}\scalebox{0.72}{$\pm$0.87}
        & 35.91\phantom{.}\scalebox{0.72}{$\pm$1.36}
        & 60.94\phantom{.}\scalebox{0.72}{$\pm$0.77}
        & 42.88\phantom{.}\scalebox{0.72}{$\pm$0.42}
        & 54.91\phantom{.}\scalebox{0.72}{$\pm$0.69}
        & \underline{70.48}\phantom{.}\scalebox{0.72}{$\pm$0.79} \\

        Ties-Merging
        & 54.11\phantom{.}\scalebox{0.72}{$\pm$1.41}
        & 34.58\phantom{.}\scalebox{0.72}{$\pm$0.93}
        & 59.26\phantom{.}\scalebox{0.72}{$\pm$1.08}
        & 41.23\phantom{.}\scalebox{0.72}{$\pm$0.78}
        & 53.31\phantom{.}\scalebox{0.72}{$\pm$0.48}
        & 68.04\phantom{.}\scalebox{0.72}{$\pm$1.21} \\

        DARE+WA
        & 52.76\phantom{.}\scalebox{0.72}{$\pm$1.27}
        & 33.41\phantom{.}\scalebox{0.72}{$\pm$1.18}
        & 57.94\phantom{.}\scalebox{0.72}{$\pm$0.74}
        & 39.87\phantom{.}\scalebox{0.72}{$\pm$0.57}
        & 51.72\phantom{.}\scalebox{0.72}{$\pm$0.88}
        & 66.47\phantom{.}\scalebox{0.72}{$\pm$0.61} \\

        AdaMerging
        & 56.84\phantom{.}\scalebox{0.72}{$\pm$0.86}
        & 36.22\phantom{.}\scalebox{0.72}{$\pm$1.28}
        & \underline{61.38}\phantom{.}\scalebox{0.72}{$\pm$0.71}
        & 42.73\phantom{.}\scalebox{0.72}{$\pm$0.64}
        & \underline{55.23}\phantom{.}\scalebox{0.72}{$\pm$0.60}
        & 69.94\phantom{.}\scalebox{0.72}{$\pm$0.98} \\

        \cmidrule(lr){1-7}

        \textbf{\ourmodel}
        & \textbf{61.96}\phantom{.}\scalebox{0.72}{$\pm$1.12}
        & \textbf{39.48}\phantom{.}\scalebox{0.72}{$\pm$0.88}
        & \textbf{63.84}\phantom{.}\scalebox{0.72}{$\pm$0.60}
        & \textbf{45.08}\phantom{.}\scalebox{0.72}{$\pm$0.67}
        & \textbf{57.99}\phantom{.}\scalebox{0.72}{$\pm$0.84}
        & \textbf{73.30}\phantom{.}\scalebox{0.72}{$\pm$0.80} \\

        \bottomrule

    \end{tabular}
    \end{adjustbox}

    \endgroup

    \vspace{-10pt}
\end{table*}

\clearpage

\section{Prompt Templates}\label{app:prompt}

A task is given the template matching the form of its final answer, not its domain, and the same
one at training and at evaluation. Values use the free-response template, covering mathematics
and the open-ended science of MegaScience and SciBench-Atkins; the multiple-choice GPQA-Diamond
uses the option-letter template; code tasks use the code template. Two benchmarks add a clause to
the free-response instruction: HMMT February 2026 asks for an exact closed form, and
SciBench-Atkins states the expected unit outside \textbackslash{}boxed\{\},
per~\Cref{app:benchmarks-science}.

\definecolor{mathhue}{HTML}{2A9D8F}
\definecolor{codehue}{HTML}{E76F51}
\definecolor{scihue}{HTML}{457B9D}

\newtcolorbox{promptbox}[2]{
  enhanced,
  breakable,
  colback=#1!8,
  colframe=#1!88!black,
  boxrule=0.9pt,
  arc=3pt,
  outer arc=3pt,
  left=8pt, right=8pt, bottom=6pt, top=10pt,
  before skip=5pt, after skip=5pt,
  fonttitle=\bfseries\footnotesize,
  coltitle=white,
  colbacktitle=#1!88!black,
  attach boxed title to top left={xshift=8pt, yshift=-\tcboxedtitleheight/2},
  boxed title style={arc=2pt, outer arc=2pt, boxrule=0pt, left=6pt, right=6pt,
     top=1.5pt, bottom=1.5pt},
  title={#2},
  before upper={\renewcommand{\pfieldhue}{#1!88!black}}
}

\newcommand{\promptbody}{\footnotesize\setstretch{0.9}}
\newcommand{\pgap}{\vspace{2pt}}

\newcommand{\pfieldhue}{black}
\newcommand{\pfield}[1]{\textcolor{\pfieldhue}{\textbf{#1}}}

\lstdefinestyle{promptcodestyle}{
  backgroundcolor=\color{codehue!16},
  basicstyle=\ttfamily\footnotesize,
  frame=none,
  xleftmargin=6pt,
  xrightmargin=6pt,
  framexleftmargin=6pt,
  framexrightmargin=6pt,
  aboveskip=4pt,
  belowskip=0pt,
}

\begin{promptbox}{mathhue}{Value Answers: Math and Open-Ended Science}
\promptbody

Analyze and solve the following [math/science domain] problem step by step.

\pgap
\pfield{Problem:} \textcolor{violet!90}{\textbf{[Insert problem text here]}}

\pgap
\pfield{Hint:} \textcolor{blue!80!black}{\textbf{The tool could be used}} for more precise and
efficient calculations and could help you to verify your result before you reach the final answer.

\pgap
\pfield{Note:} You should first analyze the problem and form a high-level solution strategy, then
utilize the tools to help you solve the problem.

\pgap
\pfield{Answer Format:} Do not put units of the final answer inside \textbackslash{}boxed\{\}. The
content of \textbackslash{}boxed\{\} should be the numerical value of the final answer only, without
any units.

\pgap
Remember once you make sure the current answer is your final answer, do not call the tools again and
directly output the final answer in the following text format, the answer format must be:
\textcolor{red!50!black}{\textbf{\texttt{\textbackslash{}boxed\{'The final answer goes here.'\}}}}.
\end{promptbox}

\begin{promptbox}{scihue}{Option Answers: Multiple-Choice Scientific QA}
\promptbody

Analyze and solve the following [science domain] problem step by step.

\pgap
\pfield{Problem:} \textcolor{violet!90}{\textbf{[Insert problem text here]}}

\pgap
\pfield{Hint:} \textcolor{blue!80!black}{\textbf{The tool could be used}} for more precise and
efficient calculations and could help you to verify your result before you reach the final answer.

\pgap
\pfield{Note:} You should first analyze the problem and form a high-level solution strategy, then
utilize the tools to help you solve the problem.

\pgap
\pfield{Answer Format:} Remember once you make sure the current answer is your final answer, do not
call the tools again and directly output the final answer in the following text format, the answer
format must be:
\textcolor{red!50!black}{\textbf{\texttt{\textbackslash{}boxed\{'The final answer goes here.'\}}}}.
You need to put the final uppercase letter option of this problem into \textbackslash{}boxed\{\}.
\end{promptbox}

\begin{promptbox}{codehue}{Program Answers: Code Generation}
\promptbody

You will be given a question (problem specification) and will generate a correct Python program that
matches the specification and passes all tests.

\pgap
\pfield{Problem:} \textcolor{violet!90}{\textbf{[Insert problem text here]}}

\pgap
\pfield{Public Examples:} Here are some input and output examples of the expected code:

\textcolor{green!50!black}{\textbf{Input:}} [sample inputs]

\textcolor{green!50!black}{\textbf{Output:}} [sample outputs]

\pgap
\pfield{Note:} You should first analyze the problem and form a high-level solution strategy, then
utilize the tools to help you solve the problem.

\pgap
\pfield{Instruction:} Read the inputs from stdin, solve the problem, and write the answer to stdout
(do not directly test on the sample inputs). Enclose your code within the delimiters shown below.
Ensure that when the Python program runs, it correctly reads inputs, executes the algorithm, and
writes output to stdout.

\pgap
\pfield{Submit:} Before submitting your code, you can utilize tools to check its correctness. Once
you make sure the current code is correct, do not call the tools again and submit your code within
the following Python code block:

\begin{lstlisting}[language=Python, style=promptcodestyle]
# YOUR CODE HERE
\end{lstlisting}
\end{promptbox}

\end{document}